
\documentclass[10pt]{article} %
\usepackage[accepted]{rlc}
\usepackage{amssymb}            %
\usepackage{mathtools}          %
\usepackage{mathrsfs}           %
\usepackage{graphicx}           %
\usepackage{subcaption}         %
\usepackage[space]{grffile}     %
\usepackage{url}                %

\usepackage{amsthm}
\usepackage[capitalize,noabbrev]{cleveref}
\usepackage{mmll} %
\usepackage{wrapfig} %

\let\classAND\AND
\let\AND\relax
\usepackage{algorithmic}

\let\AND\classAND
\AtBeginEnvironment{algorithmic}{\let\AND\algoAND}

\usepackage{colortbl}
\definecolor{Lightgray}{RGB}{235,235,235}
\usepackage{xcolor,colortbl}
\usepackage{multirow}
\usepackage{booktabs,nicematrix}
\usepackage{enumitem}

\definecolor{Gray}{gray}{0.85}
\definecolor{LightCyan}{rgb}{0.88,1,1}
\newcolumntype{a}{>{\columncolor{Gray}}c}
\newcolumntype{b}{>{\columncolor{white}}c}

\def \algname {\text{LSVI-ASE}}
\def \decoup {\cK_{\text{DC}}}

\renewcommand*{\backref}[1]{}
\renewcommand*{\backrefalt}[4]{%
\ifcase #1 %
    No citations.%
\or
    (p. #2.)%
\else
    (pp. #2.)%
\fi}%

\title{More Efficient Randomized Exploration for Reinforcement Learning via Approximate Sampling}

\author{Haque Ishfaq\footnotemark[1]\\%\thanks{Equal contribution}  \\
    haque.ishfaq@mail.mcgill.ca \\
    Mila, McGill University
    \And
    Yixin Tan\thanks{Equal contribution}\\%\printfnsymbol{1} \\
    yixin.tan@duke.edu\\
    Duke University
    \And 
    Yu Yang\\
    Duke University
    \And
    Qingfeng Lan\\
    University of Alberta
    \And 
    Jianfeng Lu\\
    Duke University
    \And
    A. Rupam Mahmood\\
    Amii, University of Alberta
    \And 
    Doina Precup\\
    Mila, McGill University
    \And 
    Pan Xu\\
    Duke University
    }

\begin{document}

\maketitle

\begin{abstract}
Thompson sampling (TS) is one of the most popular exploration techniques in reinforcement learning (RL). However, most TS algorithms with theoretical guarantees are difficult to implement and not generalizable to Deep RL. While the emerging approximate sampling-based exploration schemes are promising, most existing algorithms are specific to linear Markov Decision Processes (MDP) with suboptimal regret bounds, or only use the most basic samplers such as Langevin Monte Carlo. In this work, we propose an algorithmic framework that incorporates different approximate sampling methods with the recently proposed Feel-Good Thompson Sampling (FGTS) approach  \citep{zhang2022feel,dann2021provably}, which was previously known to be computationally intractable in general. When applied to linear MDPs, our regret analysis yields the best known dependency of regret on dimensionality, surpassing existing randomized algorithms. Additionally, we provide explicit sampling complexity for each employed sampler. Empirically, we show that in tasks where deep exploration is necessary, our proposed algorithms that combine FGTS and approximate sampling perform significantly better compared to other strong baselines. On several challenging games from the Atari 57 suite, our algorithms achieve performance that is either better than or on par with other strong baselines from the deep RL literature.

\end{abstract}

\section{Introduction}\label{sec:introduction}

A fundamental problem in reinforcement learning (RL) is balancing exploration-exploitation trade-off. One effective mechanism for addressing this challenge is Thompson Sampling (TS) \citep{thompson1933likelihood, strens2000bayesian, osband2016generalization}, which gained popularity due to its simplicity and strong empirical performance \citep{osband2016deep,osband2018randomized,ishfaq2023provable}. 
While there have been numerous works on TS in both RL theory \citep{osband2013more,russo2019worst,zanette2019frequentist,ishfaq2021randomized,xiongnear} and deep RL literature \citep{osband2016deep, osband2018randomized,fortunato2017noisy,plappert2018parameter,ishfaq2023provable,li2024hyperagent}, 
there remains a substantial gap between algorithms that excel in theoretical properties and those that demonstrate strong empirical performance. This disparity highlights the need for a unified framework that provides the ultimate unification of theory and practice for Thompson sampling.

In particular, heuristic RL algorithms that are motivated by TS \citep{osband2016deep,osband2018randomized,fortunato2017noisy} have shown great empirical potential while often lacking any theoretical guarantee. Existing RL theory works on TS suffer from sub-optimal dimension dependency compared to its upper confidence bound (UCB) counterparts \citep{jin2019provably,zanette2019frequentist, ishfaq2021randomized,ishfaq2023provable}. Recently proposed   Feel-Good Thompson sampling (FGTS) \citep{zhang2022feel,dann2021provably} bypasses this issue by incorporating an optimistic prior term in the posterior distribution of Q function. However, these works fail to provide any computationally tractable sampling procedure from this posterior distribution.

Recently there has been some works that use Langevin Monte Carlo (LMC) \citep{dwaracherla2020langevin,xu2022langevin,ishfaq2023provable,hsu2024randomized} to implement TS which are both provably efficient and practical. However, these works lack generality by confining only to LMC and it remains unclear whether many other advanced approximate sampling methods are compatible with this algorithmic scheme for implementing TS. Moreover, the theoretical analyses of these works are limited to linear MDPs \citep{jin2019provably} while having sub-optimal regret bound. Thus, it is unclear how the sampling error of LMC would affect the theoretical regret guarantee under more general structural assumptions. This shows a clear divergence between the RL theory and the deep RL literature when it comes to TS. To this end, we aim to design an approximate TS based RL algorithm that is generalizable and flexible enough to use different approximate samplers while achieving optimal dependency in the regret bound.

With this aim, we propose several FGTS class of algorithms that incorporates different approximate samplers from the Markov Chain Monte Carlo (MCMC) literature. Unlike previous works that assume exact posterior sampling (e.g.,~\cite{zhang2022feel,dann2021provably,agarwal2022model,agarwal2022non}) by assuming access to unrealistic sampling oracle, we propose practically implementable approximate posterior sampling scheme under FGTS framework using different approximate samplers.  

\subsection{Key Contributions}
We highlight the main contributions of the paper below:

\begin{itemize}[leftmargin=*,nosep]

    \item We present a class of practical and efficient TS based online RL algorithms that prioritizes easy implementation and computational scalability. Concretely, we present practically implementable FGTS style algorithms that are based on approximate samplers from the MCMC literature. Our proposed algorithm allows flexible usage of different approximate samplers such as Langevin Monte Carlo (LMC)~\citep{durmus2019analysis} or Underdamped Langevin Monte Carlo (ULMC)~\citep{chen2014stochastic,cheng2018underdamped} and is easy to implement compared to other state-of-the-art exploration focused deep RL algorithms.

    \item Our main theoretical result provides regret bound under general Markov decision processes and value function classes. Our general analytical framework decomposes the regret bound into two components: 1) the idealistic regret bound assuming exact TS (as addressed in many previous works such as  \cite{agarwal2022non,agarwal2022model}), and 2) the additional term introduced by using approximate samplers. This generalizable and fine-grained analysis allows us to analyze the impact of sampling error for any RL setting with an existing exact-sampling regret bound and known convergence rates for the approximate samplers.
    
    \item When applied to linear MDPs \citep{jin2019provably}, our proposed algorithm achieves a regret bound of $\Tilde{O}(dH^\frac32\sqrt{T})$, where $d$ is the dimension of the feature mapping, $H$ is the planning horizon, and $T$ is the total number of steps. This regret bound has the best known dimension dependency for any randomized and UCB based algorithms.

    \item We provide extensive experiments on both $N$-chain environments \citep{osband2016deep} and challenging Atari games \citep{bellemare2013arcade} that require deep exploration. Our experiments indicate our proposed algorithms perform similarly or better than state-of-the-art exploration algorithms from the deep RL literature.
    
\end{itemize}

\subsection{Additional Related Work}\label{sec:related work}

Randomized least-squares value iteration (RLSVI) based algorithms induce deep exploration by injecting judiciously tuned random noise into the value function \citep{russo2019worst, zanette2019frequentist,ishfaq2021randomized,xiongnear}. \citet{osband2016deep,osband2018randomized} propose deep RL variant of RLSVI wherein they train an ensemble of randomly initialized neural networks and view them as approximate posterior samples of Q functions. Another deep RL variant of RLSVI is Noisy-Net \citep{fortunato2017noisy} that directly injects noise to neural network parameters during the training phase. More recently, \citet{dwaracherla2020langevin, ishfaq2023provable} propose using LMC to perform approximate TS. While \citet{ishfaq2023provable} provides regret bound for their proposed LMC-LSVI algorithm under linear MDP, their regret bound, like other existing randomized exploration algorithms, has sub-optimal dependency on the dimension of the linear MDP. \citet{hsu2024randomized} further extends LMC-LSVI to the cooperative multi-agent reinforcement learning setting. \citet{dann2021provably} proposes conditional posterior sampling based on FGTS~\citep{zhang2022feel} with a regret bound that has optimal dependency on the dimension in linear MDPs, but their algorithm is intractable due to the need to access to some unknown sampler oracle.

\section{Preliminary}\label{sec:preliminary}

In this paper, we consider an episodic discrete-time Markov decision process (MDP), denoted by $(\mathcal{S}, \mathcal{A}, H, \mathbb{P}, r)$, with $\mathcal{S}$ being the state space, and $\mathcal{A}$ the action space. The MDP is non-stationary across $H$ different stages, which form an episode. $H$ is also often referred to as the episode length. $\mathbb{P} = \{\mathbb{P}_h\}_{h=1}^H$ is the collection of the state transition probability distributions,  $\mathbb{P}_h(\cdot \mid x, a)$ denotes the transition kernel at stage $h \in [H]$.  Let $r = \{r_h\}_{h=1}^H$ be the collection of reward functions, which we assume to be deterministic and bounded in $[0,1]$ for the simplicity of the presentation.
 
We define a policy $\pi$ in this MDP as a collection of $H$ functions $\{\pi_h : \mathcal{S} \rightarrow \mathcal{A}\}_{h\in [H]}$, where $\pi_h(x)$ means the action at state $x$ given by policy $\pi$ at stage $h$ of the episode.  

At stage $h \in [H]$, we define the value function $V_h^\pi: \mathcal{S} \rightarrow \mathbb{R}$ as the total expected rewards collected by the agent if it starts at state $x_h=x$ and follows policy $\pi$ onwards, i.e., $V_h^\pi(x) = \mathbb{E}_{\pi, \PP} \big[\sum_{h'=h}^H r_{h'}(x_{h'},a_{h'}) \big| x_h = x\big]$.

We also define the action-value function (or Q function) $Q_h^\pi: \mathcal{S}\times \mathcal{A} \rightarrow \mathbb{R}$ as the total expected rewards by the agent if it starts at state $x_h=x$ and action $a_h=a$ and follows policy $\pi$ onwards, i.e., $Q_h^\pi(x,a) = \mathbb{E}_{\pi,\PP} \big[\sum_{h'=h}^H r_{h'}(x_{h'},a_{h'}) \big| x_h = x, a_h = a\big]$.

To simplify the notation, we denote operator $[\mathbb{P}_h V_{h+1}^\pi](x,a) = \mathbb{E}_{x' \sim \mathbb{P}_h(\cdot | x, a)} V_{h+1}^\pi(x')$. Thus, we write the Bellman equation associated with a policy $\pi$ as
\begin{align}\label{eq:bellman-eq}
    Q_h^\pi(x,a) = (r_h + \mathbb{P}_h V_{h+1}^\pi)(x, a),\qquad
    V_h^\pi(x) = Q_h^\pi(x, \pi_h(x)),\qquad 
    V_{H+1}^\pi(x)  = 0.
\end{align}
We denote the optimal policy as $\pi^*=\{\pi_h^*\}_{h=1}^{H}$, which is the collection of the optimal policies at each stage $h$. We further denote $V_h^*(x) = V_h^{\pi^*}(x)$ and $Q_h^*(x, a) = Q_h^{\pi^*}(x, a)$. It can be shown that $\pi^*$ is a deterministic policy and it satisfies $Q_h^{\pi*}(s,a)=\max_{\pi}Q^{\pi}(s,a)$ and $V_h^{\pi^*}(s)=\max_{\pi}V_h^{\pi}(s)$ for all $s\in\cS$ and $a\in\cA$ \citep{bertsekas2019reinforcement}.

We denote the Bellman optimality operator by $\cT_h^*$ that maps any function $Q$ over $\cS\times\cA$ to 
\begin{align}\label{def:bellman_optimality_operator}
   \textstyle [\cT_h^* Q](x,a) = r_h(x,a) + \EE_{x'\sim \PP_h(\cdot|x,a)}\big[\max_{a'\in \cA}Q_{h+1}(x',a')\big].
\end{align}
Note that the optimal Q function satisfies 
$\cT_h^* Q_{h+1}^* = Q_h^*$, for all $h\in[H]$.

The agent follows the following iterative interaction protocol. 
At the beginning of each episode $k\in[K]$, an adversary picks an initial state $x_1^k$ for stage $1$, and the agent executes a policy $\pi^k$ and updates the policy in the next stage according to the received rewards. 
We measure the suboptimality of the agent by the total regret  defined as
\begin{equation*}
    \text{Regret}(K) = \sum_{k=1}^K \text{REG}_k := \sum_{k=1}^K \big[V_1^*(x_1^k) - V_1^{\pi^k}(x_1^k)\big].
\end{equation*}

\subsection{Notations}
We use $a=O(b)$ to indicate that $a\leq Cb$ for a universal constant $C>0$. Also, we write $a=\Theta(b)$ if there are universal constants $c'>c>0$ such that $cb\leq a\leq c'b$, and the notation $\Tilde{O}(\cdot)$ and $\Tilde{\Theta}(\cdot)$ mean they hide polylog factors in the parameters. For two probability distributions $p$ and $q$ on the same probability space, we denote their total variation (TV) distance by $TV(p, q)$. For $T:\mathbb{R}^d \to \mathbb{R}^d$, 
the pushforward of a distribution $p$ is denoted as $T_\# p$, 
such that 
$T_\# p(A) = p( T^{-1}(A))$ for any measurable set $A$.

\section{Algorithm Design}\label{sec:algorithm design}

In this section, we present our core algorithm, a general framework that leverages Feel-Good Thompson Sampling (FGTS)~\citep{zhang2022feel, dann2021provably} alongside various approximate sampling techniques such as Langevin Monte Carlo (LMC) \citep{durmus2019analysis} and Underdamped Langevin Monte Carlo (ULMC) \citep{chen2014stochastic,cheng2018underdamped}. The proposed general algorithm is displayed in \Cref{Algorithm:FG-LMC}.

Our algorithm design resembles that of \citet{ishfaq2023provable} in the sense that, unlike other approximate TS algorithms \citep{russo2019worst,zanette2019frequentist,ishfaq2021randomized}, it performs exploration by coupling approximate sampling into value iteration step. However, our design choice offers significant flexibility of the algorithm: it allows us to employ a wide range of prior distributions and integrate different samplers, enabling us to tailor the exploration process to specific problem characteristics. The generality of our framework allows it to address suboptimality observed in existing exploration approaches \citep{ishfaq2023provable}. By incorporating a flexible prior selection mechanism, we can overcome limitations inherent in specific prior choices employed by other methods. This flexibility enables us to potentially achieve better performance across diverse exploration problems.

\begin{algorithm}[t]
\caption{ Least-Squares Value Iteration with Approximate Sampling Exploration (\algname)}\label{Algorithm:FG-LMC}
\begin{algorithmic}[1]
\STATE Input: feel-good prior weight $\eta$, step sizes $\{\tau_{k, h} > 0\}_{k, h\geq 1}$,  temperature $\beta$, friction coefficient $\gamma$, loss function $L_k(w)$.
\STATE Initialize $w_h^{1,0} =  \textbf{0}$ for $h \in [H]$, $J_0 = 0$.
\FOR{episode $k=1,2,\ldots, K$}
\STATE Receive the initial state $s_1^k$.

\FOR{step $h=H, H-1,\ldots, 1$} \label{line:begin_inner_for_loop}
    \STATE $w_h^{k,0} = w_h^{k-1, J_{k-1}}$\label{line:warm-start}
    \FOR{$j = 1, \ldots, J_k$}
        \STATE Generate $w_h^{k,j}$ via a sampler in \Cref{sec:different_samplers} \label{line:sampling_w}
    \ENDFOR \label{line:end_inner_for_loop}

\STATE  $Q^k_{h}(\cdot,\cdot) \leftarrow \min\{Q(w_h^{k,J_k};\phi(\cdot, \cdot)) , H -h +1\}^{+}$  \label{algline:q-update} \label{Eq:q-function-estimate-truncated}
\STATE  $V^k_{h}(\cdot) \leftarrow \max_{a \in \mathcal{A}} Q_{h}^{k}(\cdot,a)$. \label{Alg:min-for-Q}
\ENDFOR

\FOR{step $h=1, 2, \ldots, H$}

\STATE Take  action  $a^k_{h} \leftarrow \argmax_{a \in \mathcal{A}} Q_h^{k}(s_h^k,a)$.
\STATE  Observe reward $r^k_{h}(s_h^k,a_h^k)$, get next state $s^k_{h+1}$. 
\ENDFOR
\ENDFOR
\end{algorithmic}
\end{algorithm}

\subsection{Feel-Good Thompson Sampling}\label{sec:feel_good}

Assume we have collected data trajectories in the first $k-1$ episodes as $D_{k-1} = \{(x_1^\tau, a_1^\tau, r(x_1^\tau, a_1^\tau)),\ldots,(x_H^\tau, a_H^\tau, r(x_H^\tau, a_H^\tau))\}_{\tau=1}^{k-1}$. To estimate the $Q$-function for stage $h$ at the $k$-th episode of the learning process, we define the following loss function for $h \in [H]$:
\begin{align}\label{eq:loss_h>0}
    \textstyle L_h^k(w_h) = \eta \sum_{\tau=1}^{k-1}\left[r_h(x_h^\tau, a_h^\tau) + \text{max}_{a\in\mathcal{A}} Q^k_{h+1}(x_{h+1}^\tau, a) - Q(w_h;\phi(x_h^\tau, a_h^\tau))\right]^2, 
\end{align}
where $\phi(\cdot,\cdot)$ is a feature vector of the corresponding state-action pair and $Q(w_h;\phi(x_h^\tau, a_h^\tau))$ denotes any possible approximation of the Q function that is parameterized by $w_h$ and takes $\phi(x_h^\tau, a_h^\tau)$ as input. $Q_h^k$ is defined in Line \ref{Eq:q-function-estimate-truncated} of \Cref{Algorithm:FG-LMC} and is the truncated estimated Q function.
Moreover, we let  $L^k_0(w_1) = -\lambda \text{max}_{a\in\mathcal{A}} Q(w_1; x_1^k, a)$, where $L_0^k$ is the Feel-Good exploration prior term \citep{zhang2022feel}. The posterior distribution at episode $k$ and stage $h> 1$ is then given by
\begin{align*}
    q_k^h(w_h)\propto p_0^h(w_h)\exp(-L_h^k(w_h)),
\end{align*}
where $p_0^h$ is the prior distribution of $w_h$. And at stage $h=1$, we have $q_k^1(w_1)\propto p_0^1(w_1)\exp(-L_1^k(w_1)-L_0^k(w_1))$. Then at episode $k$, the exact target (joint) posterior of $Q$, which is denoted by $q_k$, is given by 
\begin{align}\label{Eq:posterior_of_Q}
   \textstyle  q_k(w) \propto p_0(w)\exp(-\sum_{h=0}^H L_h^k(w_h)),
\end{align}
where $p_0(w) = \prod_{h=1}^H p_0^h(w_h)$. 
Compared to standard TS \citep{thompson1933likelihood,osband2013more}, FGTS incorporates an additional exploration term $\exp(-L_0^k(w_1))$ in the likelihood function. 
This term encourages the selection of value functions at the first time step that yield large values for the initial state. This bias is particularly beneficial during early learning stages when wider exploration is crucial. 

\textbf{Challenges of FGTS in practice:} Despite the regret of FGTS has been proven to be achieving the optimal dependency on the dimension in bandits \citep{zhang2022feel} and reinforcement learning \citep{dann2021provably}, existing FGTS algorithms are often computationally intractable as they assume access to sampling oracles for sampling from a high-dimensional distribution at each iteration. Specifically, previous FGTS based algorithms proposed by \citet{zhang2022feel, dann2021provably} simply sample $Q$ function from the posterior distribution defined in \eqref{Eq:posterior_of_Q} in the beginning of each episode and then follow a greedy policy with respect to the sampled $Q$ function. However, this assumes access to a sampling oracle which allows one to sample from \eqref{Eq:posterior_of_Q} that is not generally available in practice. Since we cannot directly sample from the true posterior distribution $q_k$, we propose using approximating samplers to generate posterior estimates which we describe next. 

\subsection{Approximate Samplers}\label{sec:different_samplers}
In this subsection, we present different approximate sampling methods which we use to approximately sample from the posterior distribution defined in \eqref{Eq:posterior_of_Q}. Let $p\propto e^{-L}$ be a probability density on $\mathbb R^d$ such that $L$ is continuously differential. The goal is to generate samples from $p$.

\textbf{Langevin Monte Carlo. } LMC leverages the Euler discretization method to approximate the continuous-time Langevin diffusion process, making it a popular sampling algorithm in machine learning~\citep{welling2011bayesian}. Langevin diffusion with stationary distribution $p$ is the stochastic process defined by the stochastic differential equation (SDE) $d w_t = -\nabla L(w_t) dt + \sqrt{2}dB_t$, 
where $B_t$ is a standard Brownian motion in $\mathbb R^d$. To obtain the LMC algorithm, we take the Euler-Murayama discretization of the SDE. For a fixed step size $\tau > 0$, temperature $\beta$ and $w_0\in\mathbb{R}^d$, LMC is defined by the iteration
\begin{align*}
    w_{k+1} = w_k - \tau\nabla L(w_k) + \sqrt{2\beta^{-1}\tau}\xi_k,
\end{align*}
where $\xi_k\sim \mathcal{N}(0, I_d)$. Previous works have thoroughly established strong theoretical guarantees for the convergence of LMC~\citep{dalalyan2017theoretical, xu2018global,zou2021faster}.

In Line \ref{line:sampling_w} of  \Cref{Algorithm:FG-LMC}, we can use LMC to approximately sample $w_h^{k,J_k}$ from the posterior defined in \eqref{Eq:posterior_of_Q}. In our deep RL experiment in \Cref{sec:experiments}, we also incorporate adaptive bias term for the gradient of loss function as introduced in \citet{ishfaq2023provable}. 

\textbf{Underdamped Langevin Monte Carlo. } 
While LMC offers an elegant approach, its scalability suffers as the problem dimension, error tolerance, or condition number increases~\citep{zheng2024accelerating, zhang2023improved}. To mitigate these limitations, we exploit Underdamped Langevin Monte Carlo (ULMC), which exhibits enhanced scalability in such high-dimensional or poorly conditioned settings. The appeal of ULMC lies in its connection to Hamiltonian Monte Carlo (HMC)~\citep{Brooks_2011}. Since underdamped Langevin diffusion (ULD) incorporates a Hamiltonian component, its discretization can be viewed as a form of HMC. Notably, HMC has been empirically observed to converge faster to the stationary distribution compared to LMC~\citep{cheng2018underdamped}. Introducing a balance of exploration and exploitation through momentum, the ULD is given by the SDE
\begin{align*}
\begin{split}
dw_t & = P_t dt, \\
dP_t & = -\nabla L(w_t)dt + \gamma P_tdt + \sqrt{2\beta^{-1}\gamma}dB_t,\\ 
\end{split}
\end{align*}
where $\gamma, \beta>0$ are friction coefficient and temperature respectively. We note that instead of using Euler-Maruyama as for LMC, the ULMC can be implemented in the following way:
\begin{align}
    dw_t & = P_t dt,\notag \\
dP_t & = -\nabla L(w_{k\tau})dt + \gamma P_tdt + \sqrt{2\beta^{-1}\gamma}dB_t,\label{ulmc:exact}
\end{align}
for $t\in[k\tau, (k+1)\tau]$, where $\tau>0$ is the step-size. This formulation of ULMC can be
integrated in a closed form~\citep{cheng2018underdamped, zhang2023improved}, and hence our theoretical analysis is based on this scheme. However, obtaining the closed-form solution is computationally expensive in our setting due to the cubic cost $O(d^3)$ of Cholesky decomposition. So, we employ an adapted Euler-Maruyama method in our experiments for efficient numerical integration. Applying Euler-Maruyama with step size $\tau>0$, we obtain the following iteration scheme of $w_k$ and $P_k$:
\begin{align}\label{Eq:ulmc_EM_scheme}
    & w_{k+1} = w_k + \tau P_k,\notag\\
    & P_{k+1} = P_k - \tau \nabla L(w_k) - \gamma \tau P_k + \sqrt{2\beta^{-1}\gamma \tau}\xi_{k},
\end{align}
where $\xi_k\sim \mathcal{N}(0, I_d)$. In practice, to improve the performance, we follow \citet{ishfaq2023provable} to incorporate adaptive bias term to the gradient, which leads to the following update:
\begin{align}\label{Eq:aulmc}
\begin{split}
    & m_{k}=\alpha_1 m_{k-1}+\left(1-\alpha_1\right)\nabla L(w_k) \\
    & v_{k}=\alpha_2 v_{k-1}+\left(1-\alpha_2\right) \nabla L(w_k) \odot \nabla L(w_k)\\
    & P_{k} = (1 - \gamma \tau) P_{k-1} + \tau \left(\nabla L(w_k)+ a m_{k} \oslash \sqrt{v_{k}+\lambda \bf{1}}\right) + \sqrt{2\beta^{-1}\gamma \tau}\xi_{k}\\
    & w_{k+1} = w_k - \tau P_k,
\end{split}
\end{align}
where the hyperparameters $\alpha_1, \alpha_2 \in [0, 1)$ control the exponential decay rates of the moving
averages~\citep{kingma2014adam}.

\section{Theoretical Analysis}\label{sec:theoretical_analysis} 

This section presents the theoretical analysis of our proposed algorithm. We begin by establishing a regret bound for general function classes, shedding light on the impact of sampling error on regret. Subsequently, we focus on linear MDPs \citep{jin2019provably}, providing a detailed analysis of both the regret bound and the corresponding sampling complexity.

\subsection{Regret Bound for General Function Classes}\label{sec:regret_bound_for_general_function_class}
Assume that the agent is given a $Q$-function class $\cQ = \cQ_1 \times \cQ_2 \times \ldots \times \cQ_H$ of functions $Q = \{Q_h\}_{h \in [H]}$ where $Q_h: \cX \times \cA \rightarrow \RR$. 
For any $Q \in \cQ, h \in [H]$ and state-action pair $x,a$, we define the Bellman residual as 
\begin{equation}
    \cE_h(Q;x,a) = \cE(Q_h, Q_{h+1};x,a) = Q_h(x,a) - \cT_h^* Q_{h+1}(x,a).
\end{equation}

We have the following assumptions on the value-function class: 
\begin{assumption}\label{assumption_realizability}
[Realizability]. Assume that $Q^*\in \cQ$.
\end{assumption}

\begin{assumption}\label{assumption_boundedness}[Boundedness] Assume that $\exists b\geq 1$ such that for all $Q\in\cQ$ and $h\in[H]$, $Q_h(x, a) \in [0, b-1]$,
for all $(x, a)\in \cS \times \cA$.
\end{assumption}
\begin{assumption}\label{assumption_completeness}[Completeness] For all $h\in[H]$ and $Q_{h+1}\in\cQ_{h+1}$, there is a $Q_h\in\cQ_h$ such that $Q_h = \cT_h^* Q_{h+1}$.
\end{assumption}
It's important to note that these assumptions are only necessary for general function classes. One can verify that these assumptions are satisfied in some specific settings, such as linear MDPs \citep{jin2019provably} defined in \Cref{subsec:linear_mdp}.

We first introduce two metrics following~\cite{dann2021provably} that characterizes the structural complexity of the MDP and the effective size of the value-function class $\cQ$ respectively.
\begin{definition}\label{def:dc}[Decoupling Coefficient]
Let $\decoup$ be the smallest quantity so that for any sequence of functions $\{Q^k\}_{k\in\mathbb{N}}\subset \cQ$ and $h\geq 0$, it holds that,
{\small
\begin{align*}
     \sum_{h=1}^H \sum_{k=1}^K\EE_{[x_h, a_h]\sim p(\cdot| Q^k, x_1)}\big[\cE_h(Q^k;x_h,a_h)\big] 
    \leq  \inf_{\mu \in(0, 1]}\bigg[ \mu\sum_{h=1}^H \sum_{k=1}^K \sum_{s=1}^{k-1}\EE_{[x_h, a_h]\sim p(\cdot| Q^s, x_1)}[\cE_h(Q^k;x_h,a_h)]^2 +\frac{\decoup}{4\mu}\bigg].
\end{align*}}
\end{definition}

The decoupling coefficient, $\decoup$, measures the growth rate of average Bellman residuals compared to cumulative squared Bellman residuals. We refer the readers to \citet{dann2021provably} for further details on relationship between decoupling coefficient and other complexity measures typically used in RL such as Bellman-Eluder dimension \citep{jin2021bellman}.

\begin{definition}\label{Definition:kappa}
For any function $Q'\in\cQ_{h+1}$, we define the set $\cQ_h(\epsilon, Q') = \{Q\in\cQ_h: \sup_{x,a} |Q(x,a) - \cT^*_h Q'(x,a)|\leq \epsilon\}$ of functions that have small Bellman error with $Q'$ for all state-action pairs. Using this set, we define $\kappa(\epsilon) = \sup_{Q\in\cQ}\sum_{h=1}^H -\ln p_0^h(\cQ_h(\epsilon, Q_{h+1}))$.
\end{definition}

The set $\cQ_h(\epsilon, Q_{h+1})$ includes the functions that approximately satisfy the Bellman equation and $p_0^h(\cQ_h(\epsilon, Q_{h+1}))$ denotes the probability that is assigned on this set by the prior. From the definition, it is clear that the complexity $\kappa(\epsilon)$ takes a small value if the prior is high for any $Q \in \cQ$ and in that case, it is equivalent to an approximate completeness assumption. Please refer to \citet{dann2021provably} for further details on this metric. 

We denote the sampled posterior by Algorithm~\ref{Algorithm:FG-LMC} at episode $k$ by $q_k'$, which generally deviates from the true posterior $q_k$ defined in \eqref{Eq:posterior_of_Q} due to the inherent limitations of approximate samplers discussed in \Cref{sec:different_samplers}. At each episode k, we define the sampling error $\delta_k = TV(q_k, q_k')$ as the TV distance between the true posterior and the approximate posterior generated by our sampler. 

Using the quantities defined above, we present our first theorem: a frequentist (worst-case) expected regret bound for Algorithm~\ref{Algorithm:FG-LMC}:
\begin{theorem}\label{thm:regret_general}
Under Assumption~\ref{assumption_realizability},~\ref{assumption_boundedness} and~\ref{assumption_completeness}, if $\eta\leq 2/5b^2$, then
\begin{align*}
    & \EE[\text{Regret}(K)] \leq \frac{\lambda}{\eta}\decoup + \frac{2K}{\lambda}\kappa(b/K^2) + \frac{6H}{\lambda} + \frac{b}{K} 
    + \sum_{k=1}^K \left[\left(\frac{\eta}{4\lambda}b^2H(k-1) +  b\right)\cdot \delta_k\right],
\end{align*}
where the expectation incorporates the inherent randomness of the MDP through samples drawn from it and the algorithm's own stochastic elements. If we further set $\eta = 1/4b^2$, $\lambda = \sqrt{K\kappa(b/K^2)/b^2\decoup}$ and assume $\lambda b^2 \geq 1$ and without loss of generality that $b\geq 16$, then the bound becomes
\begin{align}\label{eq:regret_simplified}
   \EE[\text{Regret}(K)] = O\left(b\sqrt{\decoup\kappa(b^2/K)K} + b^2H + \frac bK\right) + \frac{1}{16}b^2\sum_{k=1}^K k\delta_k.
\end{align}
\end{theorem}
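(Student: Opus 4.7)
The plan is to adapt the regret decomposition developed by \citet{dann2021provably} for exact Feel-Good Thompson Sampling to the approximate-sampling regime by carefully accounting for the cost of replacing the true posterior $q_k$ by its sampled counterpart $q_k'$ in every expectation that appears in the analysis. The decomposition in the statement strongly suggests a two-stage strategy: first prove the ideal regret assuming $q_k' = q_k$, then absorb the perturbation induced by $TV(q_k, q_k') = \delta_k$ into a separate sampling-error sum.

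For the ideal part, I would follow the FGTS framework: write $\text{REG}_k = V_1^*(x_1^k) - V_1^{\pi^k}(x_1^k)$, use the Bellman residual identity to express the accumulated regret as $\sum_{k,h} \EE[\cE_h(Q^k; x_h^k, a_h^k)]$ up to a martingale term, apply the decoupling coefficient (\Cref{def:dc}) with a parameter $\mu$ to convert expected Bellman residuals into a quantity involving cumulative squared residuals, and finally control the squared residuals through a PAC-Bayes / Donsker--Varadhan argument with prior $p_0$ and rate $\lambda$. The argument of $\kappa(\cdot)$ arises from a standard covering over $\cQ_h$ at resolution $b/K^2$, together with the feel-good prior term $L_0^k$ contributing the $b/K$ tail. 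Tuning $\mu \asymp \eta$ against the decoupling term produces the ideal bound $\lambda \decoup/\eta + 2K\kappa(b/K^2)/\lambda + 6H/\lambda + b/K$.

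To handle approximate sampling, I would keep the entire Dann--Mohri--Luo pipeline running under $q_k$ and only at the very end substitute $q_k'$ for $q_k$ inside each outer expectation, invoking the variational representation $|\EE_{q_k} f - \EE_{q_k'} f| \leq \|f\|_\infty \cdot \delta_k$. There are two families of such expectations. The first is the direct regret term $\EE_{w \sim q}[V_1^*(x_1^k) - V_1^{\pi^{Q(w)}}(x_1^k)]$, whose integrand is bounded by $b$ (\Cref{assumption_boundedness}), producing the $b \cdot \delta_k$ contribution. The second arises from the PAC-Bayes step, which evaluates $\EE_{q_k}[\sum_{h=0}^H L_h^k(w)]$; since each $L_h^k$ is a sum of at most $k-1$ squared Bellman-type terms bounded by $\eta b^2$, the integrand has magnitude $O(\eta b^2 H (k-1))$, and the $1/\lambda$ scaling inherited from the PAC-Bayes inequality together with the universal constant $1/4$ yields exactly the $(\eta b^2 H (k-1)/(4\lambda)) \delta_k$ term. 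The simplified bound \eqref{eq:regret_simplified} then follows by plugging in $\eta = 1/(4b^2)$ and $\lambda = \sqrt{K \kappa(b/K^2)/(b^2 \decoup)}$, balancing $\lambda \decoup/\eta$ against $K\kappa/\lambda$ to get $b\sqrt{\decoup \kappa K}$, absorbing $6H/\lambda$ into $b^2 H$ using $\lambda b^2 \geq 1$, and rewriting the sampling sum with the constraint $b \geq 16$ so that the additive $b$ is dominated by $b^2 k/16$.

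The main obstacle is step three: ensuring that the substitution $q_k \leftrightarrow q_k'$ can be performed cleanly without entangling it with the martingale concentration used to pass from on-policy expected residuals to observed residuals. The clean resolution is that concentration is applied conditionally on the sampled $w$, so the sampler distribution appears only at the outermost layer of each expectation bound and a single TV replacement suffices per episode. A subsidiary technical point is verifying that the integrand in the PAC-Bayes step really is bounded by $\eta b^2 H(k-1)$ uniformly in $w$, which follows from the truncation in Line~\ref{Eq:q-function-estimate-truncated} of \Cref{Algorithm:FG-LMC} together with \Cref{assumption_boundedness}; without truncation the integrand would be unbounded and the TV approach would fail. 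Once these two points are established, the remaining algebra is routine and the bound in \eqref{eq:regret_simplified} drops out directly.
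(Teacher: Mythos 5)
Your proposal follows essentially the same route as the paper's proof: decompose the per-episode regret into a decoupling-coefficient part and a potential-function ($\kappa$) part, bound the latter under the exact posterior $q_k$ via Theorem~1 of \citet{dann2021provably}, and pass back to the approximate posterior $q_k'$ by paying a sup-norm-times-TV price with the integrand bounded by $\tfrac{\eta}{4}b^2H(k-1)+\lambda b$; your two families of swapped expectations and your closing algebra reproduce the paper's coefficients exactly. The one point you should make explicit is that the decoupling term is \emph{not} swapped: the paper bounds it by $\lambda^2\decoup/\eta$ directly under $q_k'$, which is legitimate because \Cref{def:dc} is a deterministic statement about arbitrary sequences $\{Q^k\}\subset\cQ$ and therefore applies to the sequence actually executed by the algorithm. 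Your phrasing of substituting $q_k'$ for $q_k$ ``in each outer expectation at the very end'' would, if applied to that term as well, introduce an extra $O(bH)\,\delta_k$ contribution that does not appear in the theorem; exempting the decoupling term (as your enumeration of exactly two families implicitly does) is what keeps the sampling overhead down to $\big(\tfrac{\eta}{4\lambda}b^2H(k-1)+b\big)\delta_k$.
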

\begin{remark}
    It is important to emphasize that the theorem establishes the relationship between the regret and the sampling error, without necessarily asserting that the sampling error $\delta_k$ is small for general function classes. In Section~\ref{subsec:linear_mdp}, we delve deeper into controlling the sampling error with respect to the sampling complexity for linear MDPs.
\end{remark}
\begin{remark}
    The final term in ~\eqref{eq:regret_simplified} highlights that during initial episodes (small $k$), our approximate samplers can have relaxed accuracy requirements. This aligns with the algorithm's exploratory phase, where precise posterior estimates are less crucial compared to later exploitation stages when accurate value estimation becomes critical.
\end{remark}

\begin{remark}
 The derived regret bound from \Cref{thm:regret_general}, can be decomposed into two parts: 
\begin{equation}\label{eq:r_origin_sampler}
R_{origin}  = \frac{\lambda}{\eta} \decoup + \frac{2K}{\lambda}\kappa(b/K^2) 
     + \frac{6H}{\lambda} + \frac{b}{K},
   \quad\mathrm{and}\quad 
R_{sample}  = \sum_{k=1}^K \left[\left(\frac{\eta}{4\lambda}b^2H(k-1) +  b\right)\cdot \delta_k\right].
\end{equation}
Here $R_{sample}$ accounts for the sampling error. It is noteworthy that $R_{origin}$ mirrors Theorem 1 from~\cite{dann2021provably}, and consequently, we adhere to their analytical framework for deriving this part. For $R_{sample}$, we separately examine different samplers for their respective sampling complexities.   
\end{remark}

\begin{remark}
Note that if we can do TS exactly at each step, i.e. $\delta_k=0$ for all $k$, then \Cref{thm:regret_general} reduces to Theorem 1 in~\cite{dann2021provably}. Also as discussed in their work, the decoupling coefficient $\decoup$ can vary in different settings.
\end{remark}

\subsection{Analysis of Errors Induced by Approximating Samplers}\label{subsection:decompose_deltak}

It is important to note that $\delta_k$ within \Cref{thm:regret_general} cannot be directly controlled by the chosen approximating samplers employed in \Cref{Algorithm:FG-LMC}. Therefore, a further decomposition of this term is necessary (see details in \Cref{sec:sampling_error}):
\begin{proposition}\label{prop:delta}
Let $\delta_k^h$ be the sampling error (in the total variation sense) induced by our sampler at step $h\in[H]$ and episode $k\in[K]$ and let $\delta_k$ be as defined in \Cref{sec:regret_bound_for_general_function_class}. Then $\delta_k \leq \sum_{h=1}^H\delta_k^h$.
\end{proposition}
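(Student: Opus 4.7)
The plan is to exploit the sequential (backward-in-$h$) structure of Algorithm~\ref{Algorithm:FG-LMC} and apply the chain rule for total variation distance. At episode $k$, the sampler produces $w_H^{k,J_k}, w_{H-1}^{k,J_k}, \ldots, w_1^{k,J_k}$ in that order, and the target of the stage-$h$ sampler depends on the already-drawn $w_{h+1}^{k,J_k}, \ldots, w_H^{k,J_k}$ only through the truncated value estimate $Q_{h+1}^k$ appearing inside $L_h^k$. Consequently, both $q_k$ and the approximate joint $q_k'$ factorize in the same autoregressive way:
\begin{align*}
q_k(w_1, \ldots, w_H) &= \prod_{h=1}^H q_k^h\bigl(w_h \mid w_{h+1}, \ldots, w_H\bigr), \\
q_k'(w_1, \ldots, w_H) &= \prod_{h=1}^H q_k'^h\bigl(w_h \mid w_{h+1}, \ldots, w_H\bigr),
\end{align*}
where $q_k^h(\cdot \mid w_{>h}) \propto p_0^h(w_h)\exp(-L_h^k(w_h) - \mathbf{1}[h{=}1]L_0^k(w_1))$ is the true stage-$h$ conditional and $q_k'^h(\cdot \mid w_{>h})$ is what the approximate sampler of Section~\ref{sec:different_samplers} actually produces.

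The key step is the standard chain rule for TV: for any two joint laws $P,Q$ on $(X,Y)$, $TV(P,Q) \leq TV(P_X, Q_X) + \mathbb{E}_{Q_X}[TV(P_{Y\mid X}, Q_{Y\mid X})]$, which follows from the variational formula $TV(P,Q) = \sup_{0 \leq f \leq 1}|\int f\, dP - \int f\, dQ|$ and a triangle inequality, or equivalently from an explicit coupling that first draws $X$ under the optimal coupling and then, conditional on $X$, draws $Y$ under the optimal conditional coupling. Iterating this inequality $H-1$ times along the factorization above gives
\[
TV(q_k, q_k') \;\leq\; \sum_{h=1}^H \mathbb{E}_{q_k'^{>h}}\!\left[ TV\bigl( q_k^h(\cdot \mid w_{>h}),\, q_k'^h(\cdot \mid w_{>h}) \bigr) \right],
\]
where $q_k'^{>h}$ denotes the marginal of the approximate joint on $(w_{h+1}, \ldots, w_H)$.

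By the definition of $\delta_k^h$ as the sampling error of the stage-$h$ approximate sampler at episode $k$, each summand on the right equals exactly $\delta_k^h$ (the expectation is with respect to the realized higher-stage samples, which are precisely what the stage-$h$ sampler conditions on in the algorithm). Summing over $h$ yields $\delta_k = TV(q_k, q_k') \leq \sum_{h=1}^H \delta_k^h$. The only mildly delicate point is bookkeeping: one must fix the convention that $\delta_k^h$ refers to the \emph{conditional} TV between the true and approximate stage-$h$ posteriors given the already-sampled $w_{>h}$, averaged over those samples; this is the natural convention and matches the way LMC/ULMC mixing bounds are used later, so no deeper analytical obstacle arises.
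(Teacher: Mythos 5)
Your proposal is correct and follows essentially the same route as the paper: the paper's Proposition~\ref{prop:tv_chain rule} is precisely the TV chain rule you invoke (its proof even passes through the expectation-over-the-marginal form you write before bounding by a uniform constant), and the paper then iterates it along the same backward-in-$h$ autoregressive factorization, peeling off one $\delta_k^h$ at a time. The only cosmetic difference is that the paper states the one-step lemma with a $\sup$ over the conditioning variable while you keep the expectation under the approximate marginal, and the paper handles the truncation map via the data-processing inequality in a separate remark rather than inside the proposition's proof; neither changes the substance.
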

\begin{remark}
\Cref{prop:delta} allows us to decompose the sampling error $\delta_k$ into individual components $\delta_k^h$, representing the total variation distance at step $h$ within episode $k$. Notably, these individual components $\delta_k^h$ are directly controllable by our approximate samplers. This translates to the overall sampling error $\delta_k \leq \sum_{h=1}^H \delta_k^h$ highlighting the crucial role of sampler accuracy in each step in managing the cumulative error $\sum_{h=1}^H \delta_k^h$.
\end{remark}
\begin{remark}
    One should expect both sampling error and truncation error to contribute to the total error $\delta_k$, however, by considering truncation as a transport map between probability distributions and assuming that the exact target distribution is invariant with respect to the truncation due to its belonging to the given function class $\cQ$, we are able to disregard the effect of truncation error using the data-processing inequality. See Appendix~\ref{sec:sampling_error} for details.
\end{remark}
The proposition implies that the regret arising from the approximate sampler defined in~\eqref{eq:r_origin_sampler} is upper-bounded by $ R_{sample}\leq \sum_{k=1}^K \big[(\eta/4\lambda)b^2H(k-1) +  b)\cdot\sum_{h=1}^H\delta_k^h\big]$.

\subsection{Applications to Linear MDPs}\label{subsec:linear_mdp}

\begin{table*}[!htpb]
\begin{center}
\begin{small}
\resizebox{\textwidth}{!}{
\begin{tabular}{l  a  b  a  b  a}
\hline
\rowcolor{LightCyan}&&&Computational & Sampling\\
\rowcolor{LightCyan}\multirow{-2}{*}{Algorithm}& \multirow{-2}{*}{Regret} & \multirow{-2}{*}{Exploration} & Tractability & Complexity\\ \hline
LSVI-UCB \citep{jin2019provably} & $\Tilde{\mathcal{O}}(d^{3/2} H^{3/2} \sqrt{T} )$ &  UCB & Yes & NA \\ \hline
OPT-RLSVI \citep{zanette2019frequentist} & $\Tilde{\mathcal{O}}(d^2 H^2 \sqrt{T} )$ &  TS & Yes & NA\\ \hline

ELEANOR \citep{zanette2020learning} & $\Tilde{\mathcal{O}}(dH^{3/2}\sqrt{T})$ & Optimism & No  & NA\\ \hline
CPS \citep{dann2021provably} & $\Tilde{\mathcal{O}}(dH^{2}\sqrt{T})$ & FGTS & No  & NA\\ \hline
LSVI-PHE \citep{ishfaq2021randomized} & $\Tilde{\mathcal{O}}(d^{3/2} H^{3/2} \sqrt{T})$ & TS & Yes  & NA\\\hline 
LMC-LSVI \citep{ishfaq2023provable} & $\Tilde{\mathcal{O}}(d^{3/2}H^{3/2}\sqrt{T})$ & LMC & Yes & $\Tilde{\Theta}(\frac{\kappa^3K^3H^3}{d\ln(dT)})$\\\hline 
LSVI-ASE with LMC sampler & $\Tilde{\mathcal{O}}(dH^{3/2}\sqrt{T})$ & FGTS \& LMC & Yes & $\Tilde{\Theta}(\frac{\kappa^3K^3H^3}{d\ln(dT)})$\\\hline
LSVI-ASE with ULMC sampler & $\Tilde{\mathcal{O}}(dH^{3/2}\sqrt{T})$ & FGTS \& ULMC & Yes & $\Tilde{\Theta}(\frac{\kappa^{3/2}K^2H^2}{\sqrt{d\ln(dT)}})$ \\\hline
\end{tabular}
}
\end{small}
\end{center}
\caption{Regret upper bound for episodic, non-stationary, linear
MDPs. Here, computational tractability refers to the ability of a computational problem to be solved in a reasonable amount of time using a feasible amount of computational resources.
\label{table:bounds}}
\end{table*}

A concrete example where we can interpret the regret bound from \Cref{thm:regret_general} is the linear MDP \citep{jin2019provably, yang2020reinforcement, yang2019sample} setting.
\begin{definition}\label{def:linear_MDP}
(Linear MDP). An MDP $(\mathcal{S}, \mathcal{A}, H, \mathbb{P}, r)$ is said to be a linear MDP with a feature $\phi:\mathcal{S}\times \mathcal{A}\rightarrow \mathbb{R}^d$, if for any $h\in [H]$,  there exist $d$ unknown (signed) measures $\mu_h = (\mu_h^{(1)}, \ldots ,\mu_h^{(d)})$ over $\mathcal{S}$ and an unknown vector $\theta_h\in\mathbb R^d$ such that for any $(x,a)\in \mathcal{S}\times \mathcal{A}$, we have $\mathbb{P}_h(\cdot|x, a) = \langle\phi(x,a), \mu_h(\cdot)\rangle$ and $r_h(x,a) = \langle\phi(x,a), \theta_h\rangle$.

Without loss of generality, we assume $\|\phi(x, a)\|_2 \leq 1$ for all $(x, a) \in \mathcal{S}\times \mathcal{A}$, and
$\max\{\|\mu_h(\mathcal{S})\|_2, \|\theta_h\|_2\} \leq\sqrt{d}$ for all $h \in [H]$. 
\end{definition}

We first bound $\kappa(\epsilon)$ defined in \Cref{Definition:kappa} for linear MDP. While previous work by~\cite{dann2021provably} provides bounds with a uniform prior distribution over the function class, it does not align with the way TS algorithms are implemented in practice. For this, we consider a Gaussian distribution as the prior distribution.
\begin{lemma}\label{lemma:kappa_linearMDP}
If the stage-wise priors $p_0^h$ are chosen as $\mathcal{N}(0, \sqrt{d}HI_d)$, then $\kappa(\epsilon) = dHO(\ln(dH/\epsilon))$.
\end{lemma}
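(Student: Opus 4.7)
The plan is to exploit the linear structure of the MDP to show that the Bellman backup $\cT_h^* Q_{h+1}$ is itself a linear function $\langle\phi(x,a), w_h^\star\rangle$ of the feature map with a norm-bounded coefficient $w_h^\star$, and then to lower bound the Gaussian prior mass of a small neighborhood of $w_h^\star$ inside the parameter space. Concretely, I would first invoke \Cref{def:linear_MDP}: for any $Q_{h+1}\in\cQ_{h+1}$ there exists $w_h^\star\in\RR^d$ with $\cT_h^* Q_{h+1}(x,a) = \langle\phi(x,a), w_h^\star\rangle$, namely $w_h^\star = \theta_h + \int \max_{a'}Q_{h+1}(x',a')\,\mathrm{d}\mu_h(x')$. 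Using $\|\theta_h\|_2\le\sqrt{d}$, $\|\mu_h(\cS)\|_2\le\sqrt{d}$, and the boundedness $0\le Q_{h+1}\le H$ implied by the truncation step in \Cref{Algorithm:FG-LMC}, I would conclude $\|w_h^\star\|_2 = O(H\sqrt{d})$ uniformly over all admissible $Q_{h+1}$.

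Next I would argue that the Euclidean ball $B(w_h^\star,\epsilon) := \{w\in\RR^d:\|w-w_h^\star\|_2\le\epsilon\}$ (identified with its image under the truncated linear parameterization) is contained in $\cQ_h(\epsilon, Q_{h+1})$. The key observation is $|\langle\phi(x,a), w-w_h^\star\rangle| \le \|\phi(x,a)\|_2\|w-w_h^\star\|_2 \le \epsilon$, and since $\cT_h^* Q_{h+1}(x,a)\in[0,H-h+1]$ the truncation operator $y\mapsto\min\{y,H-h+1\}^+$ is a $1$-Lipschitz contraction that preserves this inequality. Hence
\begin{equation*}
p_0^h\bigl(\cQ_h(\epsilon, Q_{h+1})\bigr) \;\ge\; p_0^h\bigl(B(w_h^\star,\epsilon)\bigr).
\end{equation*}

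I would then lower bound the right-hand side by the product of the infimum of the Gaussian density over the ball and the volume of the ball:
\begin{equation*}
p_0^h(B(w_h^\star,\epsilon)) \;\ge\; \frac{\pi^{d/2}\epsilon^d}{\Gamma(d/2+1)}\cdot(2\pi\sigma^2)^{-d/2}\exp\!\Bigl(-\tfrac{(\|w_h^\star\|_2+\epsilon)^2}{2\sigma^2}\Bigr),
\end{equation*}
with $\sigma^2$ the variance of the prior $\cN(0,\sqrt{d}H I_d)$. Taking negative logarithms and applying Stirling's approximation $\ln\Gamma(d/2+1) = O(d\ln d)$ yields
\begin{equation*}
-\ln p_0^h(B(w_h^\star,\epsilon)) \;\le\; \tfrac{d}{2}\ln(\sigma^2/\epsilon^2) + \tfrac{\|w_h^\star\|_2^2 + \epsilon^2}{\sigma^2} + O(d\ln d),
\end{equation*}
so the choice of $\sigma$ proportional to $\sqrt{d}H$ (up to the appropriate interpretation of the covariance) absorbs the Gaussian-tail term $\|w_h^\star\|_2^2/\sigma^2$ into $O(d)$ and leaves a leading contribution of $d\cdot O(\ln(dH/\epsilon))$ per stage. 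Summing over $h\in[H]$ and taking the supremum over $Q$—which is harmless because the bound on $\|w_h^\star\|_2$ is uniform in $Q_{h+1}$—gives $\kappa(\epsilon) = dH\cdot O(\ln(dH/\epsilon))$.

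The main obstacle I anticipate is bookkeeping of constants in the variance–tail trade-off: one must verify that the chosen prior variance is simultaneously large enough that $\|w_h^\star\|_2^2/\sigma^2$ does not dominate and small enough that $\ln(\sigma/\epsilon)$ remains $O(\ln(dH/\epsilon))$. A secondary minor nuisance is handling the truncation operator cleanly so that membership in $B(w_h^\star,\epsilon)$ translates into membership in $\cQ_h(\epsilon,Q_{h+1})$; the $1$-Lipschitz argument above suffices but must be stated carefully, since the function class consists of truncated, not raw, linear functions.
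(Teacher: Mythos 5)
Your proposal is correct and follows essentially the same route as the paper's own (much terser) proof: lower-bound the Gaussian prior mass of $\cQ_h(\epsilon,Q_{h+1})$ by the mass of an $\epsilon$-ball around the parameter $w_h^\star$ realizing $\cT_h^*Q_{h+1}$ (whose linear representation is the paper's \Cref{prop:linear_embedded}), take negative logarithms, and sum over $h$. You simply make explicit the ball-containment, the norm bound $\|w_h^\star\|_2=O(H\sqrt{d})$, and the variance--tail trade-off that the paper's three-line computation asserts without detail.
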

\begin{remark}
While Gaussian priors are commonly used \citep{he2015delving, goodfellow2016deep}, we highlight that the prior distribution $p_0^h$ can be any distribution in practice, as long as a suitable bound for $\kappa(\epsilon)$ exists. This flexibility allows for incorporating domain-specific knowledge into the prior.
\end{remark}

We can now illustrate Theorem~\ref{thm:regret_general} for linear MDP:
\begin{corollary}\label{Corollary:linear_MDP_regret}
If we set $\eta = 2/5H^2$ and $\lambda = \sqrt{K\kappa(H/K^2)/dH^3(1+\ln(2T))}$, then the expected regret of \Cref{Algorithm:FG-LMC} after $K$ episodes in a linear MDP is bounded as
\begin{align*}
\textstyle\EE[\text{Regret}(K)] = O(dH^\frac32\sqrt{T}\ln(dT)) + \sum_{k=1}^K \alpha_k \big(\sum_{h=1}^H \delta_k^h\big),
\end{align*}
where $\alpha_k = O(\sqrt{\ln(dT)/K}H^2k)$ and $T = HK$ is the total number of steps.
\end{corollary}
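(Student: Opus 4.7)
The plan is to instantiate Theorem \ref{thm:regret_general} in the linear MDP setting by identifying the right constants $b$, $\decoup$, and $\kappa$, then substitute the stated $\eta$ and $\lambda$ and simplify each term. First I would verify that linear MDPs satisfy Assumptions \ref{assumption_realizability}--\ref{assumption_completeness} with $b = H$ (since the truncated $Q$-values lie in $[0,H-h+1]\subseteq[0,H]$), and collect the relevant complexity quantities: the decoupling coefficient of a linear MDP satisfies $\decoup = O\!\left(dH(1+\ln(2T))\right)$ (the bound used in Dann et al., 2021 for this setting), while Lemma \ref{lemma:kappa_linearMDP} gives $\kappa(\epsilon) = dH\cdot O(\ln(dH/\epsilon))$. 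In particular, both $\kappa(H/K^2)$ and $\kappa(H^2/K)$ equal $dH$ times a $\mathrm{polylog}(d,H,K,T)$ factor.

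Next I would plug $\eta = 2/5H^2$ and $\lambda = \sqrt{K\kappa(H/K^2)/(dH^3(1+\ln(2T)))}$ into the non-simplified form of Theorem \ref{thm:regret_general},
\begin{equation*}
\EE[\text{Regret}(K)] \leq \frac{\lambda}{\eta}\decoup + \frac{2K}{\lambda}\kappa(b/K^2) + \frac{6H}{\lambda} + \frac{b}{K} + \sum_{k=1}^K\Bigl[\Bigl(\tfrac{\eta b^2 H(k-1)}{4\lambda} + b\Bigr)\delta_k\Bigr],
\end{equation*}
verifying that $\eta \leq 2/5b^2$ holds and that $\lambda$ is precisely the choice that balances the first two terms once $\decoup$ and $\kappa$ are replaced by their linear-MDP values. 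A direct calculation then shows $\lambda\decoup/\eta = O\bigl(H^2\sqrt{dHK\ln T}\bigr)$ and $2K\kappa(H^2/K)/\lambda$ is of the same order; using $T=HK$ to convert $\sqrt{K}$ into $\sqrt{T/H}$, both collapse to $O(dH^{3/2}\sqrt{T}\ln(dT))$. The remaining deterministic terms $6H/\lambda = O(H^{5/2}/\sqrt{T})$ and $b/K = H/K$ are lower order.

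For the sampling-error coefficient, the same substitution yields $\eta b^2 H/(4\lambda) = O\bigl(H^2\sqrt{\ln(dT)/K}\bigr)$, and since this factor dominates the additive $+b = H$ in every term with $k\geq 1$, one obtains $\alpha_k = O\bigl(\sqrt{\ln(dT)/K}\,H^2 k\bigr)$. Finally I would invoke Proposition \ref{prop:delta} to replace $\delta_k$ by $\sum_{h=1}^H \delta_k^h$, giving the stated decomposition
\begin{equation*}
\EE[\text{Regret}(K)] = O\bigl(dH^{3/2}\sqrt{T}\ln(dT)\bigr) + \sum_{k=1}^K \alpha_k\Bigl(\sum_{h=1}^H \delta_k^h\Bigr).
\end{equation*}

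The argument is largely algebraic bookkeeping rather than conceptually novel, and the main obstacle is keeping the polylogarithmic factors consistent. In particular, $\lambda$ is defined via $\kappa(H/K^2)$ while the first main regret term involves $\kappa(H^2/K)$; one must confirm both are of order $dH$ up to logarithms so that the intended balance truly produces the claimed $\tilde O(dH^{3/2}\sqrt{T})$ rate rather than an inflated polylog, and that the same $\kappa$-ratio does not spoil the $\sqrt{\ln(dT)/K}$ form of $\alpha_k$.
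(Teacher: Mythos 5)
Your proposal is correct and follows essentially the same route as the paper's proof: instantiate \Cref{thm:regret_general} with $b=H$, $\decoup = O(dH\ln(KH))$ (from \Cref{proposition:linear_MDP}) and $\kappa(\epsilon)=dH\,O(\ln(dH/\epsilon))$ (from \Cref{lemma:kappa_linearMDP}), substitute the stated $\eta$ and $\lambda$, compute $\alpha_k$ directly, and invoke \Cref{prop:delta} to replace $\delta_k$ by $\sum_{h=1}^H\delta_k^h$. The only blemish is the intermediate claim $\lambda\decoup/\eta = O\bigl(H^2\sqrt{dHK\ln T}\bigr)$, which should read $O\bigl(dH^2\sqrt{K}\ln(dT)\bigr)$; this is a bookkeeping slip that does not affect your (correct) final bound.
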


\subsection{Sampling Complexity of Different Samplers}
In this section, we characterize the sampling complexity of the proposed algorithms to demonstrate that we can achieve the desired regret bound as long as the chosen sampler is executed a sufficient number of times. We begin by establishing an appropriate notion of complexity.
\begin{definition}
(Sampling Complexity) The agent has access to the gradient $\nabla_w Q(w;\phi(x, a))$ for
any $w\in\mathbb{R}^d$. Then, if $\nabla_w Q$ is evaluated $G_k$ times at any episode $k\in[K]$, then
we define $G_k$ as the sampling complexity at episode $k$,
and $SC = \sum_{k\in[K]}G_k$ be the cumulative sampling complexity.    
\end{definition}
\begin{remark} 
In \Cref{Algorithm:FG-LMC}, $G_k$ specifically represents the total number of iterations employed by our approximate samplers from line \ref{line:begin_inner_for_loop} to line \ref{line:end_inner_for_loop} during episode $k$. It follows that within our analysis, $G_k = J_k$ and $SC =\sum_{k\in[K]} G_k = \sum_{k\in[K]} J_k$.
\end{remark}

\begin{theorem}\label{Theorem:sample_complexity_bound}
Consider a linear MDP defined in Definition~\ref{def:linear_MDP}. Assume that there exists $\kappa>0$ such that for any $ (k, h) \in [K] \times [H]$, the loss function defined in~\eqref{eq:loss_h>0} satisfies $M_{k, h} I \ge \nabla^2 L_h^k \ge m_{k, h}I$ and $M_{k, h}/m_{k, h}\leq \kappa$ for some $M_{k, h} \geq m_{k, h}>0$.
Then we can achieve the regret bound of $O(dH^\frac32\sqrt{T}\ln(dT))$ using our approximate samplers with the cumulative sampling complexity stated below:\newline
(1) LMC: $SC = \Tilde{\Theta}(\frac{\kappa^3K^3H^3}{d\ln(dT)})$ with step size $\tau_{k, h} = \Tilde{\Theta}(\frac{d\ln(dT)}{M_{k, h} H^2k^2\kappa})$;\newline
(2) ULMC: $SC = \Tilde{\Theta}(\frac{\kappa^{3/2}K^2H^2}{\sqrt{d\ln(dT)}})$ with step size $\tau_{k, h} = \Tilde{\Theta}(\frac{\sqrt{d\ln(dT)}}{M_{k, h} Hk})$.
\end{theorem}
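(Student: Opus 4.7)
The plan is to combine \Cref{Corollary:linear_MDP_regret} and \Cref{prop:delta} to identify the required per-step, per-episode sampling accuracy, and then plug that accuracy into standard convergence bounds for LMC and ULMC under strong log-concavity. By \Cref{Corollary:linear_MDP_regret}, the regret is at most $O(dH^{3/2}\sqrt{T}\ln(dT))$ plus the sampler-induced term $\sum_{k}\alpha_k\sum_{h}\delta_k^h$ with $\alpha_k = O(H^2 k\sqrt{\ln(dT)/K})$, and by \Cref{prop:delta} each $\delta_k^h$ is precisely the TV error of the chosen chain at step $h$ of episode $k$. A direct computation shows that the sampler-induced term stays of the same order as the leading term as long as $\delta_k^h = \tilde O(d/(Hk))$; thus the per-step tolerance needs only to tighten as $1/k$, which is exactly the $k$-dependence seen in the stated step sizes.

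The core of the proof is then converting this per-step TV tolerance into a per-episode iteration count $J_k^h$ for each sampler. Under the spectral assumption $m_{k,h}I\preceq\nabla^2 L_h^k\preceq M_{k,h}I$ with $M_{k,h}/m_{k,h}\leq\kappa$, combined with the Gaussian prior supplied by \Cref{lemma:kappa_linearMDP}, the target $q_k^h\propto p_0^h\exp(-L_h^k)$ is strongly log-concave and smooth, which is precisely the regime where LMC and ULMC enjoy sharp convergence guarantees. For LMC I would invoke a Dalalyan-type inequality of the form $\mathrm{KL}(p_J\|q_k^h)\lesssim(1-m_{k,h}\tau)^J\mathrm{KL}(p_0\|q_k^h)+M_{k,h}^2 d\tau/m_{k,h}$ and use Pinsker to pass to TV. Selecting $\tau_{k,h}$ small enough to make the discretization contribution at most $(\delta_k^h)^2$ forces $\tau_{k,h}\asymp m_{k,h}(\delta_k^h)^2/(M_{k,h}^2 d)\asymp d\ln(dT)/(M_{k,h}H^2 k^2\kappa)$, matching the claim, and requiring a balanced contraction yields $J_k^h=\tilde\Theta(\kappa^3 H^2 k^2/d)$; summing over $h\in[H]$ and $k\in[K]$ recovers $SC=\tilde\Theta(\kappa^3 K^3 H^3/(d\ln(dT)))$.

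For ULMC I would use the analogous result of \citet{cheng2018underdamped}, sharpened by \citet{zhang2023improved}, which under strong log-concavity replaces LMC's $d/\epsilon^2$ iteration dependence by $\sqrt{d}/\epsilon$ thanks to the Hamiltonian component. Substituting the same target accuracy $\delta_k^h=\tilde O(d/(Hk))$ into this sharper bound produces $\tau_{k,h}=\tilde\Theta(\sqrt{d\ln(dT)}/(M_{k,h}Hk))$ and $J_k^h=\tilde\Theta(\kappa^{3/2}Hk/\sqrt{d})$, which sum to $SC=\tilde\Theta(\kappa^{3/2}K^2H^2/\sqrt{d\ln(dT)})$.

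The principal obstacle is controlling the warm start $w_h^{k,0}=w_h^{k-1,J_{k-1}}$: the contraction factor contains $\mathrm{KL}(p_0\|q_k^h)$ where $p_0$ is essentially the previous episode's output, close (inductively) to $q_{k-1}^h\neq q_k^h$. I would bound $\mathrm{KL}(q_{k-1}^h\|q_k^h)$ by observing that $L_h^k-L_h^{k-1}$ consists of a single bounded squared-residual term (bounded under Assumption~\ref{assumption_boundedness}), so the two posteriors differ by at most a polynomial factor in $H$ and $b$ in density ratio and the resulting KL is $\tilde O(1)$ uniformly in $k$; the associated log term can then be absorbed into the polylog factors hidden by $\tilde\Theta(\cdot)$. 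A secondary but routine issue is the KL-to-TV conversion via Pinsker, which only costs a modest polylog inflation of $J_k^h$.
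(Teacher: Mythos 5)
Your proposal follows essentially the same route as the paper's proof: derive the per-step tolerance $\delta_k^h = \tilde{O}\bigl(d\sqrt{\ln(dT)}/(Hk)\bigr)$ from \Cref{Corollary:linear_MDP_regret} and \Cref{prop:delta}, feed it into the strongly-log-concave convergence guarantees for LMC and ULMC (\Cref{thm:convergence_LMC} and \Cref{thm:convergence_ULMC}) to obtain the step sizes and per-episode iteration counts, and sum over $k$ and $h$. Your additional treatment of the warm start $w_h^{k,0}=w_h^{k-1,J_{k-1}}$ via a bound on $\mathrm{KL}(q_{k-1}^h\,\|\,q_k^h)$ is a reasonable elaboration of a point the paper leaves implicit in the phrase ``with a warm start,'' but it does not change the overall argument.
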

\begin{remark}
\Cref{Theorem:sample_complexity_bound} reveals a critical relationship between the choice of sampling method and the sampling complexity of \Cref{Algorithm:FG-LMC}. Leveraging established results on demonstrating the faster mixing of ULMC over LMC in strongly log-concave settings (see \Cref{section:samplers} for details), the theorem confirms that \Cref{Algorithm:FG-LMC}, when employing ULMC, achieves the desired accuracy with lower data requirements than its LMC-based counterpart. This aligns with the intuitive notion that ULMC's momentum-based exploration enables faster learning, thereby reducing the necessary data for effective Thompson sampling.
\end{remark}

\section{Experiments}\label{sec:experiments}

In this section, we provide an empirical evaluation of our proposed algorithms with deep Q-networks (DQNs) \citep{mnih2015human} in two sets of environments: (1) the N-chain environment \citep{osband2016deep} and (2) the Atari game suite \citep{bellemare2013arcade, taiga2019bonus}. We evaluate our algorithms with different implementations. In particular, we implement the \Cref{Algorithm:FG-LMC} with different choices of prior terms and samplers. By choosing Feel-Good exploration prior term in \eqref{eq:loss_h>0} and underdamped Langevin Monte Carlo sampler with adaptive bias term in \eqref{Eq:aulmc}, we implement the \Cref{Algorithm:FG-LMC} named as Feel-Good Underdamped Langevin Monte Carlo Deep Q-Network (FG-ULMCDQN). We implement the \Cref{Algorithm:FG-LMC} with the Feel-Good exploration prior term and the adaptive Langevin Monte Carlo sampler introduced in \citet{ishfaq2023provable}, named Feel-Good Langevin Monte Carlo Deep Q-Network (FG-LMCDQN). We also provide an implementation for the \Cref{Algorithm:FG-LMC} without the Feel-Good exploration prior term named Underdamped Langevin Monte Carlo Deep Q-Network (ULMCDQN). Then we evaluate our implementations in the above mentioned environments. Our code is available at \url{https://github.com/panxulab/LSVI-ASE}.

\subsection{Experiments in N-Chain}
We demonstrate that our proposed algorithms can explore effectively in sparse-reward environment by conducting experiments in $N$-Chain environment \citep{osband2016deep} that demands deep exploration capabilities to perform well. An $N$-chain environment can be constructed by a chain of $N > 3$ states denoted by $s_1, s_2, \ldots, s_N$. Each episode of interaction, which starts at state $s_2$, lasts for $N + 9$ steps and in each step the agent can either move to the left or right. A myopic agent would gravitate toward state $s_1$ which has a small reward of $r = 0.001$ whereas an efficient agent with deep exploration capabilities would try to reach state $s_N$ which has a larger reward of $r = 1$. As each episode runs for $N + 9$ steps, the optimal return for an episode is $10$. We refer the reader to \Cref{sec:nchain} for a depiction of the environment.

\begin{wrapfigure}[16]{r}{0.5\textwidth}
    \vspace{-0.4in}  
  \begin{center}
    \includegraphics[width=0.50\textwidth]{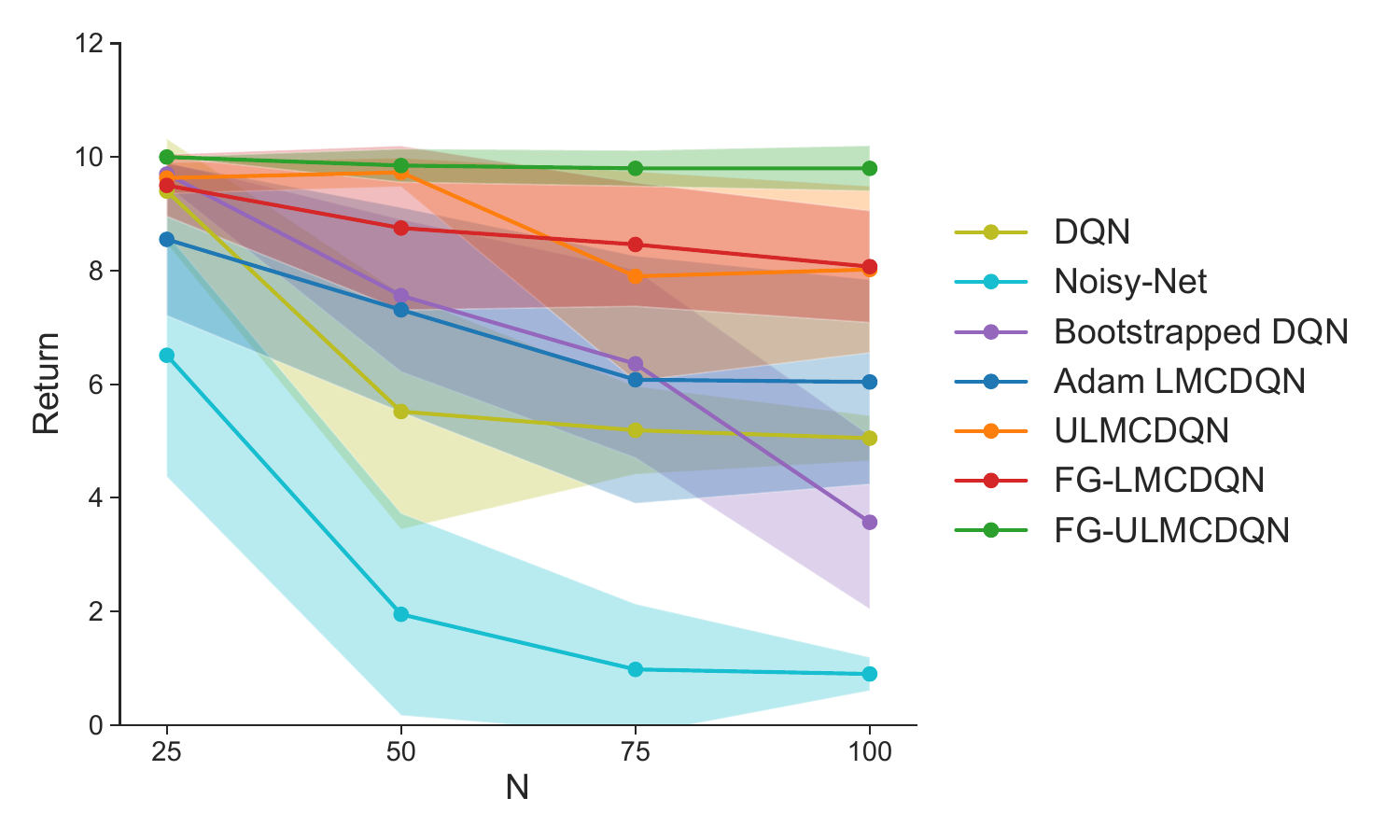}
  \end{center}
  \vspace{-0.2in}
  \caption{A comparison of different methods in $N$-chain with different chain lengths $N$. As $N$ increases, the exploration hardness increases. All results are averaged over $20$ runs and the shaded areas represent $95\%$ confidence interval.}
  \label{fig:nchain_best}
\end{wrapfigure}

In our experiments, we progressively increase the difficulty level by setting $N$ to be $25$, $50$, $75$, and $100$. For each chain length, we run each learning algorithm for $10^5$ steps across $20$ seeds.  As baseline algorithms, we use 
DQN \citep{mnih2015human}, Bootstrapped DQN \citep{osband2016deep}, Noisy-Net \citep{fortunato2017noisy} and Adam LMCDQN \citep{ishfaq2023provable}. The performance of each algorithm in each run is measured by the mean return of the last $10$ evaluation episodes. We sweep the learning rate and pick the one with the best performance for each algorithm. For our algorithms which use ULMC as a sampler, we sweep the friction coefficient $\gamma$. For FG-LMCDQN and FG-ULMCDQN, we sweep the weight for the feel-good prior term $\eta$ in the loss function. Please check \Cref{sec:nchain} for further details.

\Cref{fig:nchain_best} shows the performance of our proposed algorithms as well as the baseline algorithms under different chain lengths. The solid lines represent the average return over $20$ random seeds and the shaded areas represent the $95\%$ confidence interval. For all of our proposed algorithms, namely ULMCDQN, FG-ULMCDQN and FG-LMCDQN, we set $J_k = 4$ in \Cref{Algorithm:FG-LMC} for all chain lengths. 

From \Cref{fig:nchain_best}, we see that as the chain length $N$ increases, the performance of the baselines drops drastically. Whereas, our FGTS based algorithms FG-LMCDQN and FG-ULMCDQN are able to maintain steady performance. In particular, we would like to highlight that FG-ULMCDQN is able to get almost close to the optimal return of $10$ for all values of $N$, showing the benefit of using Feel Good prior along with underdamped LMC together in environments where deep exploration is absolutely necessary to perform well.

\subsection{Experiments in Atari Games}

\begin{figure*}
    \centering
    \includegraphics[width=\textwidth]{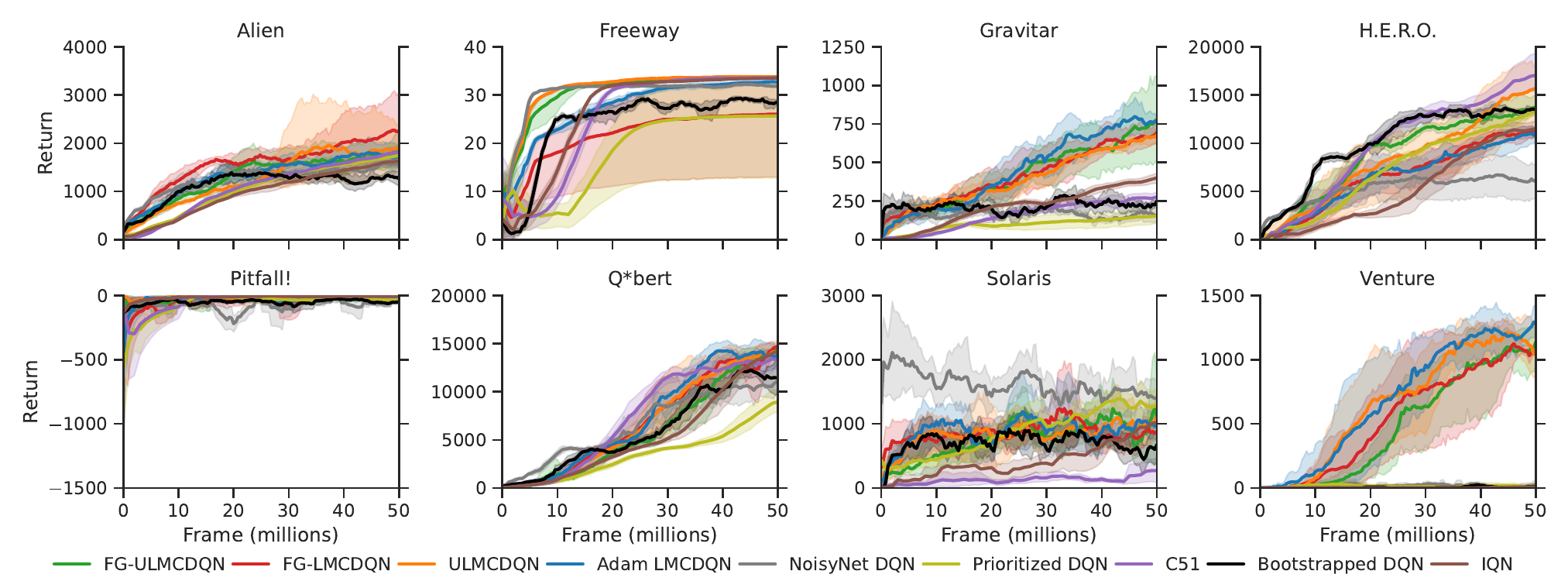}
    \caption{The return curves of various algorithms in eight hard Atari tasks over 50 million training frames. Solid lines correspond to the median performance over 5 random seeds, and the shaded areas correspond to $95\%$ confidence interval.}
    \label{fig:atari}
\end{figure*}

We evaluate our algorithms in 8 visually complicated hard exploration games, namely Alien, Freeway, Gravitar, H.E.R.O., Pitfall, Qbert, Solaris, and Venture from the Atari game suite \citep{bellemare2013arcade, taiga2019bonus}. As classified in \citet{taiga2019bonus}, among these games, Alien, H.E.R.O., and Qbert are dense reward environments and Freeway, Gravitar, Pitfall, Solaris, and Venture are sparse reward environments. In our experiments, we set $J_k = 1$ in \Cref{Algorithm:FG-LMC} to finish the training in a reasonable time. Following \citep{ishfaq2023provable}, we also incorporate the double Q trick \citep{van2010double, van2016deep} in our implementation. As baselines we consider  Adam LMCDQN \citep{ishfaq2023provable}, Noisy-Net \citep{fortunato2017noisy}, Prioritized DQN \citep{schaul2015prioritized}, C51 \citep{bellemare2017distributional}, Bootstrapped DQN \citep{osband2016deep} and  IQN \citep{dabney2018implicit}. All algorithms are trained for 50M frames (i.e., 12.5M steps) and run for 5 different random seeds. We refer the reader to \Cref{sec:atari} for further details on training and hyper-parameter choices. \Cref{fig:atari} depicts the learning curves of all algorithms in 8 Atari games. Compared to the baseline algorithms, our algorithms appear to be quite competitive despite being much simpler in implementation. We highlight the advantages of approximate sampling based algorithms in Gravitar and Venture. 
\begin{figure}[h]
\vspace{-0.3cm}
  \begin{subfigure}[b]{0.50\textwidth}
    \includegraphics[width=\linewidth]{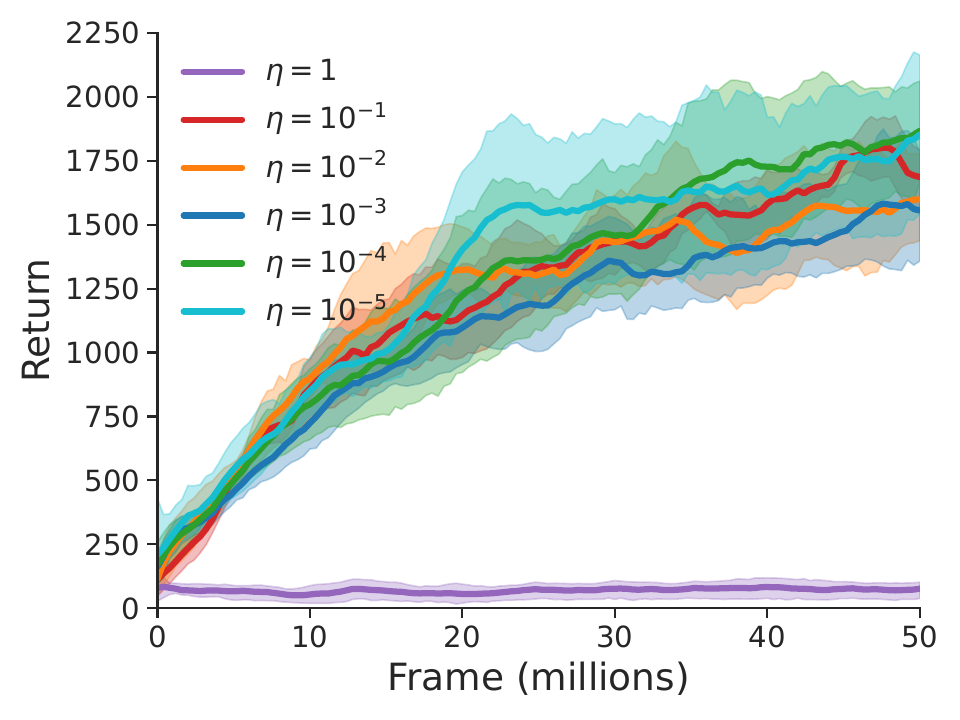}
    \caption{Different FG prior weight $\eta$ in FG-ULMCDQN}\label{fig:alien_eta}
  \end{subfigure}%
  \hspace*{\fill}   %
  \begin{subfigure}[b]{0.50\textwidth}
    \includegraphics[width=\linewidth]{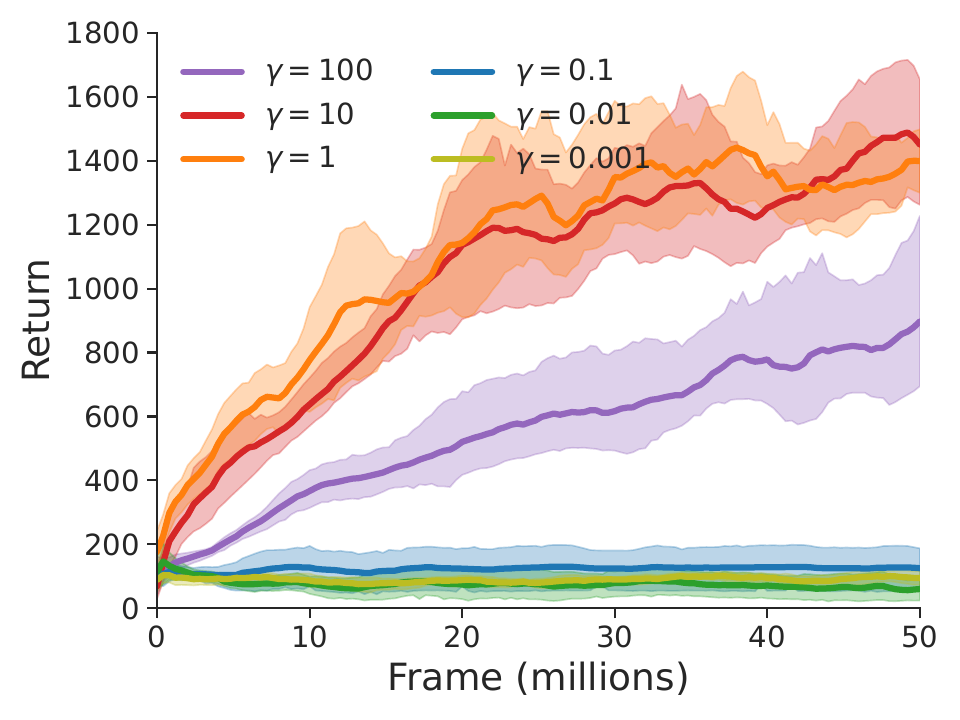}
    \caption{Different friction coefficient $\gamma$ in ULMCDQN}\label{fig:alien_gamma}
  \end{subfigure}
\caption{(a) A comparison of FG-ULMCDQN with different values of weight $\eta$ for the feel good prior term in Alien. Solid lines correspond to the average performance over 5 random seeds, and shaded areas correspond to $95\%$ confidence interval. The performance of FG-ULMCDQN is not very sensitive to the values of $\eta$ as long it is not very large. (b) A comparison of ULMCDQN with different values of the friction coefficient  $\gamma$ in Alien.} \label{fig:comp}
\end{figure}

\textbf{Sensitivity Analysis.}
In \Cref{fig:alien_eta}, we draw the learning curves of FG-ULMCDQN with different weight factor $\eta$ for the FG prior term. We observe that as long the value of $\eta$ is not too high, the performance of FG-ULMCDQN is less sensitive to the value of $\eta$. In \Cref{fig:alien_gamma}, we observe that for very high or low value of friction coefficient $\gamma$, the performance of ULMCDQN collapses. 

\section{Conclusion}
This work introduces a novel algorithmic framework that leverages efficient approximate samplers to make FGTS practical for real-world RL. Unlike prior approaches reliant on unrealistic sampling oracles, our framework enables computationally feasible exploration. Furthermore, our theoretical analysis provides a deeper understanding of the relationship between samplers and regret in FGTS algorithms. This newfound knowledge paves the way for practical exploration strategies with strong provable guarantees. Notably, our algorithm achieves an improved regret bound in linear MDPs, and showcases consistent performance in deep exploration environments.

Future directions include exploring the integration of alternative approximate samplers within our framework. Promising candidates include Metropolis-adjusted Langevin Acceptance (MALA)~\citep{besag1995bayesian} and various proximal sampling algorithms~\citep{lee2021structured}. Investigating efficient methods to incorporate these samplers into the RL setting while maintaining the framework's strengths will further enhance its applicability to diverse exploration problems.

\subsubsection*{Acknowledgments}
We gratefully acknowledge funding from the Canada CIFAR AI Chairs program, the Reinforcement Learning and Artificial Intelligence (RLAI) laboratory, Mila - Quebec Artificial Intelligence Institute, the Natural Sciences and Engineering Research Council (NSERC) of Canada, the US National Science Foundation (DMS-2323112) and the Whitehead Scholars Program at the Duke University School of Medicine. 
Qingfeng Lan would also like to acknowledge Alberta Innovates for the support they provided him for his research.

\bibliography{references}
\bibliographystyle{rlc}

\newpage
\appendix

\section{Regret Analysis}\label{sec:proof_regret_general}

\subsection{Proof of Main Results}
In this section, we restate and provide the proof of our main result \Cref{thm:regret_general} and its corollary for the linear MDP.

\begin{theorem}\label{thm:regret_general_restate}[Restatement of \Cref{thm:regret_general}]
Under Assumption~\ref{assumption_realizability},~\ref{assumption_boundedness} and~\ref{assumption_completeness}, if $\eta\leq 2/5b^2$, then
\begin{align*}
    & \E[\text{Regret}(K)] \leq \frac{\lambda}{\eta}\decoup + \frac{2K}{\lambda}\kappa(b/K^2) + \frac{6H}{\lambda} + \frac{b}{K} 
    + \sum_{k=1}^K \left[\left(\frac{\eta}{4\lambda}b^2H(k-1) +  b\right)\cdot \delta_k\right],
\end{align*}
where the expectation incorporates the inherent randomness of the MDP environment through samples drawn from it and the algorithm's own stochastic elements. If we further set $\eta = 1/4b^2$ and $\lambda = \sqrt{\frac{K\kappa(b/K^2)}{b^2\decoup}}$ and assume $\lambda b^2 \geq 1$ and without loss of generality that $b\geq 16$, then the bound becomes
\begin{align}\label{eq:regret_simplified_restate}
   \E[\text{Regret}(K)] = O\left(b\sqrt{\decoup\kappa(b^2/K)K} + b^2H + \frac bK\right) + \frac{1}{16}b^2\sum_{k=1}^K k\delta_k.
\end{align}
\end{theorem}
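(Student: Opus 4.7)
\textbf{Proof plan for Theorem~\ref{thm:regret_general_restate}.}

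The plan is to mirror the analytical framework of \citet{dann2021provably} (Theorem 1) for exact FGTS, but to carefully track the discrepancy introduced by using the approximate posterior $q_k'$ in place of the true posterior $q_k$, which contributes additive terms that are controlled by the TV distances $\delta_k$. First I would split the single-episode regret via a Feel-Good/optimism decomposition: writing
\[
V_1^*(x_1^k)-V_1^{\pi^k}(x_1^k) = \bigl(V_1^*(x_1^k)-\max_a Q^k_1(x_1^k,a)\bigr) + \bigl(\max_a Q^k_1(x_1^k,a)-V_1^{\pi^k}(x_1^k)\bigr),
\]
so that the second term can be telescoped by the performance-difference identity into $\sum_{h=1}^H \mathbb{E}_{(x_h,a_h)\sim p(\cdot\mid Q^k,x_1^k)}[\cE_h(Q^k;x_h,a_h)]$, while the first term is directly bounded using the Feel-Good prior $L_0^k$ and realizability (Assumption~\ref{assumption_realizability}).

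Next, I would apply the definition of the decoupling coefficient $\decoup$ (with a free parameter $\mu$) to trade the cumulative expected Bellman residual against cumulative squared Bellman residuals on past trajectories, plus $\decoup/(4\mu)$. The squared-error sum is then controlled by the posterior concentration properties of $q_k$: on the event that $w$ lies in $\cQ_h(b/K^2,Q_{h+1})$, the likelihood $e^{-\eta\sum_\tau(\cdot)^2}$ in $q_k$ is essentially maximal, so $\kappa(b/K^2)$ provides the prior mass estimate. Together with the Feel-Good term $-\lambda\max_a Q(w_1;x_1^k,a)$ in $L_0^k$, this yields, under the true posterior $q_k$, the origin contribution $R_{\mathrm{origin}} = (\lambda/\eta)\decoup + (2K/\lambda)\kappa(b/K^2)+(6H/\lambda)+b/K$, exactly as in \citet{dann2021provably}.

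The crucial new step is to transfer every posterior expectation from $q_k$ to the algorithm's actual sample $q_k'$. For any bounded test function $f$, $|\mathbb{E}_{q_k}f-\mathbb{E}_{q_k'}f|\le \|f\|_\infty \cdot \delta_k$. I would apply this to the two types of quantities that appear: (i) the Feel-Good/single-step terms (including the first split above), which are bounded by $b$ and therefore contribute $b\,\delta_k$ per episode; and (ii) the weighted squared-Bellman-residual terms produced by the decoupling step, which are of order $\eta\sum_{s<k}(\cE_h(Q;\cdot))^2 \le \eta b^2 (k-1) H$ across stages, and after being divided by $\lambda$ (from the $\exp(-L_0^k)/\lambda$ normalization in the FGTS posterior manipulation) contribute $(\eta/(4\lambda))b^2H(k-1)\cdot\delta_k$. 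Summing (i) and (ii) gives $R_{\mathrm{sample}}=\sum_k\bigl[(\eta/(4\lambda))b^2H(k-1)+b\bigr]\delta_k$.

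The hard part will be bookkeeping in step~(ii): I must ensure the sup-norm used in $|\mathbb{E}_{q_k}-\mathbb{E}_{q_k'}|\le \|f\|_\infty\delta_k$ is applied on the correct functional of $w$ (a sum of $k-1$ squared residuals over $H$ stages, each in $[0,b^2]$), so that the coefficient $(\eta/(4\lambda))b^2H(k-1)$ comes out with the right constants; this is where the factor of $k-1$ originates. Finally I would combine $R_{\mathrm{origin}}+R_{\mathrm{sample}}$, substitute $\eta=1/(4b^2)$ and $\lambda=\sqrt{K\kappa(b/K^2)/(b^2\decoup)}$, and verify term-by-term that $(\lambda/\eta)\decoup$ and $(K/\lambda)\kappa(b/K^2)$ balance to $O(b\sqrt{\decoup\kappa(b^2/K)K})$, that $6H/\lambda = O(b^2 H)$ under the assumption $\lambda b^2\ge 1$, and that the prefactor on $\sum_k k\delta_k$ collapses to $b^2/16$ under the stated parameter choices, yielding~\eqref{eq:regret_simplified_restate}.
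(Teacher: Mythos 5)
Your proposal is correct and follows essentially the same route as the paper: the same regret decomposition into Bellman-residual and Feel-Good terms, the same use of the decoupling coefficient and of the exact-sampling bound of \citet{dann2021provably} for the $R_{\mathrm{origin}}$ part, and the same TV-distance transfer argument with the sup-norm bounds $\tfrac{\eta}{4}b^2H(k-1)$ and $b$ producing $R_{\mathrm{sample}}$. The only cosmetic difference is that the paper makes explicit that the decoupling-coefficient term needs no $q_k\to q_k'$ transfer (the definition holds for any realized sequence of $Q$-functions), whereas your write-up leaves this implicit; your bookkeeping of which terms do require the transfer nonetheless matches the paper exactly.
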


\begin{proof}[Proof of \Cref{thm:regret_general}]

Given any policy $\pi^k$ and initial state $x_1^k$, by Lemma~\ref{lemma:regret-decomp}, we can decompose the regret:
\begin{align*}
    \text{REG}_k & = \sum_{h=1}^H \EE_{\pi^k} \left[Q_h^k(x_h^k,a_h^k) - r_h(x_h^k,a_h^k) - \EE_{x_{h+1} \sim \PP_h(\cdot | x_h^k, a_h^k)}\max_{a\in\cA}Q_{h+1}^k(x_{h+1},a)\right] \\
    & \qquad - \left[V_1^k(x_1^k) - V_1^*(x_1^k)\right]\\
    & =: \sum_{h=1}^H \text{BE}_k^h - \text{FG}_k.    
\end{align*}
and hence we can rewrite the expected regret of episode $k$ (scaled by $\lambda$) as
\begin{align*}
   \lambda\EE_{Q^k\sim q_k'} \text{REG}_k & =\lambda \EE_{Q^k\sim q_k'}\bigg[\sum_{h=1}^H \text{BE}_k^h - \text{FG}_k\bigg]\\
   & = \EE_{Q^k\sim q_k'}\sum_{h=1}^H\bigg[\lambda \text{BE}_k^h -\frac\eta4 \sum_{s=1}^{k-1}\EE_{[x_h, a_h]\sim p(\cdot| Q^s, x_1)}[\cE_h(Q^k;x_h,a_h)]^2\bigg]\\
    &\quad + \EE_{Q^k\sim q_k'}\bigg[\sum_{h=1}^H\frac\eta4\sum_{s=1}^{k-1}\EE_{[x_h, a_h]\sim p(\cdot| Q^s, x_1)}[\cE_h(Q^k;x_h,a_h)]^2 - \lambda\text{FG}_k\bigg]\\
    & =: F_k^{dc} + F_k^{\kappa}.
\end{align*}
Summing over $k=1,2,\ldots,K$, we obtain that
\begin{align*}
    \lambda\E [\text{Regret}(K)] =  \sum_{k=1}^K F_k^{dc} + \sum_{k=1}^K F_k^{\kappa}.
\end{align*}
Using Definition~\ref{def:dc}, we can bound the first term by
\begin{align}\label{eq:decoup}
    \sum_{k=1}^K F_k^{dc} \leq \frac{\lambda^2}{\eta}\decoup 
\end{align}
for any $\frac{\eta}{4\lambda}<1$. For the $F_k^\kappa$ term, by Assumption~\ref{assumption_boundedness}, $\cE_h(Q^k;x_h,a_h)^2\leq b^2$ and $|\text{FG}_k|\leq b$. Therefore,
\begin{align*}
    \left \lvert\sum_{h=1}^H\frac\eta4\sum_{s=1}^{k-1}\EE_{[x_h, a_h]\sim p(\cdot| Q^s, x_1)}[\cE_h(Q^k;x_h,a_h)]^2 - \lambda\text{FG}_k\right\rvert \leq \frac{\eta}{4}b^2H(k-1) + \lambda b.
\end{align*}
Then by the property of total variation distance,
\begin{align*}
    F_k^{\kappa} \leq \EE_{Q^k\sim q_k}\bigg[\sum_{h=1}^H\frac\eta4\sum_{s=1}^{k-1}\EE_{[x_h, a_h]\sim p(\cdot| Q^s, x_1)}[\cE_h(Q^k;x_h,a_h)]^2 - \lambda\text{FG}_k\bigg] + \bigg[\frac{\eta}{4}b^2H(k-1) + \lambda b\bigg]\cdot\delta_k,
\end{align*}
where $\delta_k = TV(q_k, q_k').$ The first term on the right handside is exactly the analog of $F_k^\kappa$ with the expectation taken over the exact target distribution of our algorithm. By Theorem~\ref{thm:regret} (which is Theorem 1 of~\cite{dann2021provably}),
\begin{align*}
   \sum_{k=1}^K\EE_{Q^k\sim q_k}\bigg[\sum_{h=1}^H\frac\eta4\sum_{s=1}^{k-1}\EE_{[x_h, a_h]\sim p(\cdot| Q^s, x_1)}[\cE_h(Q^k;x_h,a_h)]^2 - \lambda\text{FG}_k\bigg] \leq 2K\kappa(b/K^2) + 6H + \frac{\lambda b}{K}. 
\end{align*}
Hence
\begin{align*}
    \sum_{k=1}^K F_k^\kappa \leq 2K\kappa(b/T^2) + 6H + \frac{\lambda b}{K} + \sum_{k=1}^K \bigg[\frac{\eta}{4}b^2H(k-1) + \lambda b\bigg]\cdot\delta_k.
\end{align*}
Combining this inequality with~\eqref{eq:decoup}, we obtain that
\begin{align*}
    \E [\text{Regret}(K)] & \leq \frac{1}{\lambda}\bigg[\sum_{k=1}^K F_k^{dc} + \sum_{k=1}^K F_k^{\kappa}\bigg]\\ & \leq \frac{\lambda}{\eta}\decoup + \frac{2K}{\lambda}\kappa(b/K^2) + \frac{6H}{\lambda} + \frac{b}{K} 
    + \sum_{k=1}^K \bigg[\frac{\eta}{4\lambda}b^2H(k-1) +  b\bigg]\cdot\delta_k.
\end{align*}
If we further set $\eta = 1/4b^2$ and $\lambda = \sqrt{\frac{K\kappa(b/K^2)}{b^2\decoup}}$, then a direct calculation gives us
\begin{align*}
    \frac{\lambda}{\eta}\decoup + \frac{2K}{\lambda}\kappa(b/K^2) + \frac{6H}{\lambda} = 6b\sqrt{\decoup\kappa(b/K^2)K} + 6H\sqrt{\frac{b^2\decoup}{\kappa(b/K^2)K}}.
\end{align*}
Since $\lambda b^2\geq 1$, we have
\begin{align*}
    \sqrt{\frac{b^2\decoup}{\kappa(b/K^2)K}} \leq b^2.
\end{align*}
And since $b\geq 16$, we have
\begin{align*}
    \frac{\eta}{4\lambda}b^2 H(k-1) + b \leq \frac{1}{16}b^2(k-1) + b \leq \frac{1}{16}b^2k.
\end{align*}
Combining these results, we obtain~\eqref{eq:regret_simplified}.
\end{proof}
Next, we restate \Cref{Corollary:linear_MDP_regret} and provide proof for it.

\begin{corollary}\label{Corollary:linear_MDP_regret_restate}
Assume \Cref{Algorithm:FG-LMC} is run on a $d$-dimensional linear MDP. If we set $\eta = \frac{2}{5H^2}$ and $\lambda = \sqrt{\frac{K\kappa(H/K^2)}{dH^3(1+\ln(2T))}}$, then the expected regret after $K$ episodes is bounded as
\begin{align*}
    \E[\text{Regret}(K)] \leq O(dH^\frac32\sqrt{T}\ln(dT)) + \sum_{k=1}^K \alpha_k \big(\sum_{h=1}^H \delta_k^h\big),
\end{align*}
where $\alpha_k = O\bigg(\sqrt{\frac{\ln(dT)}{K}}H^2k\bigg)$ and $T = HK$ is the total number of steps. 
\end{corollary}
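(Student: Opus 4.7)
The plan is to apply Theorem \ref{thm:regret_general_restate} directly in the linear MDP setting by instantiating the key quantities $b$, $\decoup$, and $\kappa(\epsilon)$, substituting the prescribed $\eta$ and $\lambda$, and then invoking \Cref{prop:delta} to rewrite the sampling error in terms of $\delta_k^h$. The central task is bookkeeping: showing that the three terms $(\lambda/\eta)\decoup$, $(2K/\lambda)\kappa(b/K^2)$, and $6H/\lambda$ collapse into $\tilde O(dH^{3/2}\sqrt{T})$ under the chosen parameters, and then reading off the resulting coefficient $\alpha_k$ from the sampling error contribution.

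First I would verify the three function-class assumptions. In a linear MDP from \Cref{def:linear_MDP}, both $Q^*$ and $\cT_h^* Q_{h+1}$ admit a linear-in-$\phi$ representation; together with the standard parameter-norm bound this yields \Cref{assumption_realizability} and \Cref{assumption_completeness}, while the boundedness in $[0,H]$ gives \Cref{assumption_boundedness} with $b = \Theta(H)$. Next, I would cite the known bound $\decoup = \tilde O(dH)$ for linear MDPs (established in \citet{dann2021provably} under the same definition), and apply \Cref{lemma:kappa_linearMDP} with $\epsilon = H/K^2$ to obtain $\kappa(H/K^2) = O(dH\ln(dHK))$, which is $O(dH\ln(dT))$ up to constants.

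With these quantities in hand, I would substitute $\eta = 2/(5H^2)$ and $\lambda = \sqrt{K\kappa(H/K^2)/(dH^3(1+\ln(2T)))}$ into the general bound of \Cref{thm:regret_general_restate}. A direct computation gives
\begin{align*}
\frac{\lambda}{\eta}\decoup \;+\; \frac{2K}{\lambda}\kappa(H/K^2) \;+\; \frac{6H}{\lambda} \;+\; \frac{b}{K} \;=\; O\bigl(dH^{3/2}\sqrt{T}\ln(dT)\bigr),
\end{align*}
where the two main terms balance by design of $\lambda$, the $6H/\lambda$ term is lower-order, and the $b/K$ term is negligible. For the sampling-error portion, I would apply \Cref{prop:delta} to write $\delta_k \le \sum_{h=1}^H \delta_k^h$ and then inspect the coefficient $(\eta/(4\lambda))b^2 H(k-1) + b$. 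Plugging in $b = \Theta(H)$, $\eta = \Theta(1/H^2)$, and $1/\lambda = \Theta(H\sqrt{(1+\ln(2T))/K})$ (after canceling $dH$ between numerator and denominator of $\lambda^{-2}$), this coefficient simplifies to $\alpha_k = O(\sqrt{\ln(dT)/K}\,H^2 k)$, matching the stated expression.

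The most delicate step will be ensuring all constants cooperate: in particular, the assumption $\eta \le 2/(5b^2)$ requires $b = \Theta(H)$ to be treated up to an appropriate constant, and the cancellation that produces the clean $dH^{3/2}\sqrt{T}$ rate depends on $\kappa(H/K^2)$ scaling exactly as $dH$ times a log factor. The rest is arithmetic. Once these pieces are in place, summing the original regret bound and the per-episode sampling contribution $\alpha_k \sum_h \delta_k^h$ yields the corollary.
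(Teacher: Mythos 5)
Your proposal is correct and follows essentially the same route as the paper: instantiate $b=H$, $\decoup = O(dH\ln(KH))$, and $\kappa(H/K^2)=O(dH\ln(dHK))$ for linear MDPs, substitute the prescribed $\eta$ and $\lambda$ into \Cref{thm:regret_general_restate} so the two dominant terms balance to $O(dH^{3/2}\sqrt{T}\ln(dT))$, and apply \Cref{prop:delta} to read off $\alpha_k$. The only cosmetic difference is that the paper takes $b=H$ exactly (via \Cref{proposition:linear_MDP}) rather than $b=\Theta(H)$, which makes the condition $\eta\le 2/(5b^2)$ hold with equality and avoids any constant-tracking concerns.
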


\begin{proof}[Proof of \Cref{Corollary:linear_MDP_regret}]
For linear MDPs, by proposition~\ref{proposition:linear_MDP}, $b=H$, $\decoup \leq 2dH(1+2\ln(KH))$ and $\kappa(H/K^2) = dH\ln(dHK)$. Therefore $R_{origin}$ defined in~\eqref{eq:r_origin_sampler} becomes $O(dH^\frac32\sqrt{T}\ln(dT))$. Moreover, by a direct calculation,
\begin{align*}
    \alpha_k := \frac{\eta}{4\lambda}b^2H(k-1) + b = O\bigg(\sqrt{\frac{\ln(dT)}{K}}H^2k\bigg)
\end{align*}
and hence $R_{sample}$ defined in~\eqref{eq:r_origin_sampler} becomes
\begin{align*}
    \sum_{k=1}^K \alpha_k \delta_k \leq \sum_{k=1}^K \alpha_k \big(\sum_{h=1}^H \delta_k^h\big),
\end{align*}
where the last inequality is due to Proposition~\ref{prop:delta}. In conclusion,
\begin{align*}
    \E[\text{Regret}(K)] \leq R_{origin} + R_{sample} = O(dH^\frac32\sqrt{T}\ln(dT)) + \sum_{k=1}^K \alpha_k \big(\sum_{h=1}^H \delta_k^h\big).
\end{align*}
\end{proof}
With this result, we are ready to prove \Cref{Theorem:sample_complexity_bound}:

\begin{theorem}[Restatement of \Cref{Theorem:sample_complexity_bound}]\label{Theorem:sample_complexity_bound_restate}
Consider a linear MDP defined in Definition~\ref{def:linear_MDP}. Assume that there exists $\kappa>0$ such that for any $ (k, h) \in [K] \times [H]$, the loss function defined in~\eqref{eq:loss_h>0} satisfies for some $M_{k, h} \geq m_{k, h}>0$:
\begin{align*}
    M_{k, h} I \ge \nabla^2 L_h^k \ge m_{k, h}I,\quad M_{k, h}/m_{k, h}\leq \kappa.
\end{align*}
Then we can achieve the regret bound $O(dH^\frac32\sqrt{T}\ln(dT))$ using our approximate samplers with the cumulative sampling complexity stated below:\newline
(1) LMC: $SC = \Tilde{\Theta}(\frac{\kappa^3K^3H^3}{d\ln(dT)})$ with step size $\tau_{k, h} = \Tilde{\Theta}(\frac{d\ln(dT)}{M_{k, h} H^2k^2\kappa})$;\newline
(2) ULMC: $SC = \Tilde{\Theta}(\frac{\kappa^{3/2}K^2H^2}{\sqrt{d\ln(dT)}})$ with step size $\tau_{k, h} = \Tilde{\Theta}(\frac{\sqrt{d\ln(dT)}}{M_{k, h} Hk})$.
\end{theorem}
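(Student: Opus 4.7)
The plan is to combine \Cref{Corollary:linear_MDP_regret_restate}---which already absorbs the exact-sampling part of the regret into the target $O(dH^{3/2}\sqrt{T}\ln(dT))$ bound---with standard TV-convergence guarantees for LMC and ULMC on strongly log-concave targets. The argument has two stages: first back out the per-stage sampler accuracy $\delta_k^h$ that is sufficient for the additive $R_{sample}$ term to be dominated by $R_{origin}$, and then invert the convergence rate of each sampler to read off the step size $\tau_{k,h}$ and iteration count $J_k$. By \Cref{Corollary:linear_MDP_regret_restate} together with \Cref{prop:delta}, the sampling contribution is $\sum_{k}\alpha_k\sum_h\delta_k^h$ with $\alpha_k=O(\sqrt{\ln(dT)/K}\,H^2k)$, so forcing this to be $\tilde O(dH^{3/2}\sqrt T)$ reduces to $\delta_k^h \lesssim \tilde\Theta(d/(H^2k))$. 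The $1/k$ scaling is consistent with the remark after \Cref{thm:regret_general} that early episodes tolerate looser samples.

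For the second stage, the Hessian assumption $m_{k,h}I\preceq\nabla^2 L_h^k\preceq M_{k,h}I$ combined with the Gaussian prior from \Cref{lemma:kappa_linearMDP} makes each stagewise target $q_k^h\propto p_0^h\exp(-L_h^k)$ strongly log-concave with condition number at most $\kappa$; at $h=1$, the Feel-Good term $L_0^k$ is taken to be absorbed into the assumed Hessian bounds. Standard TV bounds of the Durmus--Moulines form for LMC, and their ULMC analogues of Cheng et al.\ and Zhang--Chewi, give
\begin{align*}
TV(q_k^{h,\prime},q_k^h)\;\lesssim\;(\text{warm-start term})\cdot e^{-c\,m_{k,h}\tau J_k}\;+\;\mathrm{bias}(\tau),
\end{align*}
with $\mathrm{bias}_{\mathrm{LMC}}(\tau)\asymp\sqrt{M_{k,h}^2 d\tau/m_{k,h}}$ and the sharper $\mathrm{bias}_{\mathrm{ULMC}}(\tau)\asymp M_{k,h}\sqrt{d}\,\tau/\sqrt{m_{k,h}}$. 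The warm start in Line~\ref{line:warm-start}, which initialises at the previous episode's last sample, controls the first term so that only a logarithmic number of extra iterations are needed to drive the exponential piece below the desired accuracy.

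Equating the bias to $\tilde\Theta(d/(H^2k))$ and solving yields the announced step sizes (the $1/k^2$ decay for LMC versus $1/k$ for ULMC reflecting squared versus linear bias scaling); substituting back into $J_k\asymp(m_{k,h}\tau_{k,h})^{-1}\log(1/\delta_k^h)$ and summing $H\cdot J_k$ over $k\in[K]$ recovers the two cumulative complexities, with ULMC's gain arising directly from its linear-in-$\tau$ bias. The main technical obstacle I anticipate is verifying strong log-concavity of the target at $h=1$: the Feel-Good term $-\lambda\max_a Q(w_1;x_1^k,a)$ is itself concave in $w_1$, so one must argue that the assumed lower Hessian bound $m_{k,h}I$ on $\nabla^2 L_h^k$ together with the Gaussian prior's curvature $\tfrac{1}{\sqrt{d}H}I$ dominate this negative contribution. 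A secondary subtlety is bounding the warm-start divergence between consecutive posteriors $q_{k-1}^h$ and $q_k^h$---which differ by one additional trajectory of data---uniformly in $k$; this is a standard one-step perturbation argument absorbed into the log factors.
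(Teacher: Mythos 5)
Your approach is structurally the same as the paper's: reduce via \Cref{Corollary:linear_MDP_regret} and \Cref{prop:delta} to a per-stage accuracy target for $\delta_k^h$, then invert the TV-convergence guarantees for LMC and ULMC (Theorems~\ref{thm:convergence_LMC} and~\ref{thm:convergence_ULMC} in \Cref{section:samplers}, which you restate in the equivalent ``bias plus exponential decay'' form) to read off $\tau_{k,h}$ and the iteration count $N_{k,h}$, and finally sum over $(k,h)$.

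The one concrete error is your intermediate accuracy target. To force $\sum_{k}\alpha_k\sum_h\delta_k^h = O(dH^{3/2}\sqrt{T}\ln(dT)) = O(dH^2\sqrt{K}\ln(dT))$ with $\alpha_k = O(\sqrt{\ln(dT)/K}\,H^2k)$, the per-episode budget is $O(dH^2\ln(dT)/\sqrt{K})$, and dividing by $H\alpha_k$ gives $\delta_k^h \lesssim d\sqrt{\ln(dT)}/(Hk)$, which is the requirement the paper uses --- not your $\Tilde{\Theta}(d/(H^2k))$, which is smaller by a factor of $H$ (modulo logs). Carried through Theorem~\ref{thm:convergence_ULMC} ($N=\Tilde{\Theta}(\kappa^{3/2}d^{1/2}/\epsilon)$), your target yields $SC=\Tilde{\Theta}(\kappa^{3/2}K^2H^3/\sqrt{d\ln(dT)})$, and for LMC ($N=\Tilde{\Theta}(\kappa^2 d/\epsilon^2)$) it inflates the complexity by $H^2$; so the step sizes and cumulative complexities you claim to recover do not actually follow from the target you wrote down. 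With the corrected $\delta_k^h$ the computation closes exactly as in the paper. Separately, the two obstacles you flag are legitimate: the paper's proof does not verify strong log-concavity of the $h=1$ target (the concave Feel-Good term must be dominated by the assumed curvature of $L_1^k$ and the Gaussian prior) and does not quantify the warm-start distance between consecutive posteriors --- it simply imposes the Hessian bounds on $L_h^k$ and invokes the convergence theorems as black boxes, with Theorem~\ref{thm:convergence_ULMC} explicitly assuming a warm start. Addressing these would strengthen, not change, the argument.
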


\begin{proof}[Proof of \Cref{Theorem:sample_complexity_bound}]
We give the proof for ULMC. The proof for LMC is essentially the same using Theorem~\ref{thm:convergence_LMC}. Note that for linear MDP, if we let
\begin{align}\label{eq:sampling_requirement}
    \max_{h\in[H]}\delta_k^h \leq\frac{O(Hd \ln(dT)/\sqrt{K})}{\alpha_k} = O\left(\frac{d\sqrt{\ln(dT)}}{Hk}\right)
\end{align}
at episode $k$, then $\alpha_k\sum_{h=1}^H \delta_k^h \leq O(H^2d\ln(dT)/\sqrt{K})$. This implies that for linear MDP, $\EE[\text{Regret}(K)] = O(dH^\frac32\sqrt{T}\ln(dT))$. For ULMC, by Theorem~\ref{thm:convergence_ULMC}, the requirements in~\eqref{eq:sampling_requirement} can be achieved by setting the step size $\tau_{k, h} = \Tilde{\Theta}(\frac{\sqrt{d\ln(dKH)}}{M_{k, h} Hk})$ and after $N_{k, h} = \Tilde{\Theta}(\frac{\kappa^{3/2}Hk}{\sqrt{d\ln(dKH)}})$ iterations. Summing $N_{k,h}$ over all $k\in[K]$ and $h\in[H]$, we obtain that the cumulative sample complexity is $\Tilde{\Theta}(\frac{\kappa^{3/2}K^2H^2}{\sqrt{d\ln(dKH)}})$.
\end{proof}

\subsection{Useful Lemmas}
In this section, we give some lemmas that are useful in the proof of our main results.
\begin{lemma}[Regret Decomposition]\label{lemma:regret-decomp}
The regret at episode $k$ can be decomposed into two terms
\begin{align*}
    \text{REG}_k = \E_{\pi^k, \PP}\Bigg[\sum_{h=1}^H Q_h^k(x_h^k, a_h^k)) - [\cT_h^* Q^k_{h+1}](x_h^k, a_h^k)\Bigg] - [V_1^k(x_1^k) - V_1^*(x_1^k) ]
\end{align*}
\end{lemma}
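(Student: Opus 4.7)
The plan is to establish the identity by a telescoping argument using the Bellman equation for $\pi^k$ together with the fact that $\pi^k$ is the greedy policy with respect to $Q^k$. After rearranging, the statement is equivalent to showing
\begin{align*}
V_1^k(x_1^k) - V_1^{\pi^k}(x_1^k) = \E_{\pi^k, \PP}\Bigg[\sum_{h=1}^H Q_h^k(x_h^k, a_h^k) - [\cT_h^* Q^k_{h+1}](x_h^k, a_h^k)\Bigg],
\end{align*}
so the main task is a standard performance-difference / simulation-lemma computation between the estimated value function $V^k$ and the true value function $V^{\pi^k}$ of the executed policy.

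First, I would fix $h$ and write out $V_h^k(x_h) - V_h^{\pi^k}(x_h)$ along a trajectory generated by $\pi^k$. Since $\pi^k$ is greedy with respect to $Q^k$ (Line~14 of the algorithm) and $V_h^k(x) = \max_a Q_h^k(x,a)$, we have $V_h^k(x_h) = Q_h^k(x_h, a_h)$ where $a_h = \pi^k_h(x_h)$. The Bellman equation for $\pi^k$ gives $V_h^{\pi^k}(x_h) = Q_h^{\pi^k}(x_h, a_h) = r_h(x_h,a_h) + [\PP_h V_{h+1}^{\pi^k}](x_h, a_h)$. Adding and subtracting $[\cT_h^* Q_{h+1}^k](x_h, a_h) = r_h(x_h,a_h) + [\PP_h V_{h+1}^k](x_h,a_h)$ (using $V_{h+1}^k(x') = \max_{a'} Q_{h+1}^k(x',a')$) produces the one-step identity
\begin{align*}
V_h^k(x_h) - V_h^{\pi^k}(x_h) = \bigl(Q_h^k(x_h,a_h) - [\cT_h^* Q_{h+1}^k](x_h,a_h)\bigr) + [\PP_h(V_{h+1}^k - V_{h+1}^{\pi^k})](x_h,a_h).
\end{align*}

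Next I would iterate this identity for $h=1,\dots,H$, taking the expectation under $\pi^k$ and $\PP$. The second term telescopes: after $H$ unrollings the boundary term $V_{H+1}^k - V_{H+1}^{\pi^k}$ vanishes because both quantities are zero by convention. What remains is exactly $\E_{\pi^k,\PP}\!\left[\sum_{h=1}^H Q_h^k(x_h^k,a_h^k) - [\cT_h^* Q_{h+1}^k](x_h^k,a_h^k)\right]$. Finally, using $\text{REG}_k = V_1^*(x_1^k) - V_1^{\pi^k}(x_1^k) = \bigl(V_1^k(x_1^k) - V_1^{\pi^k}(x_1^k)\bigr) - \bigl(V_1^k(x_1^k) - V_1^*(x_1^k)\bigr)$ yields the claimed decomposition.

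There is no real obstacle here: the only substantive input is the greedy choice $V_h^k(x) = Q_h^k(x, \pi_h^k(x))$, which allows the telescoping to pass through cleanly. A minor care point is keeping the expectation $\E_{\pi^k,\PP}$ consistently defined so that the one-step identity can be rolled forward to stage $h+1$ without introducing extra terms; this is routine once the recursion is set up, and the rest is bookkeeping.
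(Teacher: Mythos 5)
Your proof is correct and is essentially the same telescoping argument as the paper's: both hinge on the greedy identity $V_h^k(x)=Q_h^k(x,\pi_h^k(x))$ and the convention $V_{H+1}=Q_{H+1}^k=0$, differing only in bookkeeping (you telescope the value differences $V_h^k-V_h^{\pi^k}$ via a one-step recursion, whereas the paper expands $V_1^{\pi^k}$ as a sum of rewards, rewrites each reward through $\cT_h^*Q_{h+1}^k$, and shifts the index of the resulting $Q^k$ terms). Either route yields the identity $V_1^k(x_1^k)-V_1^{\pi^k}(x_1^k)=\E_{\pi^k,\PP}\big[\sum_{h=1}^H Q_h^k(x_h^k,a_h^k)-[\cT_h^*Q_{h+1}^k](x_h^k,a_h^k)\big]$, from which the stated decomposition follows by adding and subtracting $V_1^*(x_1^k)$.
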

\begin{proof}[Proof of \Cref{lemma:regret-decomp}]
Recall that the Bellman optimality operator $\cT_h^*$ maps any state-action function $Q^k_{h+1}$ to 
\begin{equation*}
    [\cT_h^* Q_{h+1}^k](x,a) = r_h(x,a) + \EE_{x'\sim \PP_h(x,a)}[\max_{a'\in \cA}Q^k_{h+1}(x',a')],
\end{equation*}
and hence 
\begin{align*}
    r_h(x_h^k, a_h^k) = [\cT_h^* Q_{h+1}^k](x_h^k, a_h^k) - \EE_{x_{h+1}^k\sim \PP_h(x_h^k, a_h^k)}[\max_{a\in \cA}Q^k_{h+1}(x_{h+1}^k,a)].
\end{align*}
By definition,
\begin{align*}
    V_1^{\pi^k}(x_1^k) & = \mathbb{E}_{\pi^k} \Bigg[\sum_{h=1}^H r_{h}(x_{h}^k,a_{h}^k) \bigg| x_1 = x_1^k\Bigg].
\end{align*}
And then, we have
\begin{align*}
    V_1^{\pi^k}(x_1^k) & = \mathbb{E}_{\pi^k} \Bigg[\sum_{h=1}^H r_{h}(x_{h}^k,a_{h}^k) \bigg| x_1 = x_1^k\Bigg]\\
    & = \mathbb{E}_{\pi^k} \Bigg[\sum_{h=1}^H \Big[[\cT_h^* Q_{h+1}^k](x_h^k, a_h^k) - \EE_{x_{h+1}^k\sim \PP_h(x_h^k, a_h^k)}[\max_{a\in \cA}Q^k_{h+1}(x_{h+1}^k,a)]\Big] \bigg| x_1 = x_1^k\Bigg]\\
    & = \mathbb{E}_{\pi^k, \PP} \Bigg[\sum_{h=1}^H \Big[[\cT_h^* Q_{h+1}^k](x_h^k, a_h^k) - \max_{a\in \cA}Q^k_{h+1}(x_{h+1}^k,a)\Big] \bigg| x_1 = x_1^k\Bigg]\\
    & = \mathbb{E}_{\pi^k, \PP} \Bigg[\sum_{h=1}^H \Big[[\cT_h^* Q_{h+1}^k](x_h^k, a_h^k) - Q^k_{h+1}(x_{h+1}^k,a_{h+1}^k)\Big] \bigg| x_1 = x_1^k\Bigg]\\
    & = \mathbb{E}_{\pi^k, \PP} \Bigg[\sum_{h=1}^H \Big[[\cT_h^* Q_{h+1}^k](x_h^k, a_h^k) - Q^k_{h}(x_h^k,a_h^k)\Big] \bigg| x_1 = x_1^k\Bigg] + \mathbb{E}_{\pi^k}\big[Q_1^k(x_1^k, a_1^k)\big]\\
    & = \mathbb{E}_{\pi^k, \PP} \Bigg[\sum_{h=1}^H \Big[[\cT_h^* Q_{h+1}^k](x_h^k, a_h^k) - Q^k_{h}(x_h^k,a_h^k)\Big] \Bigg] + V_1^k(x_1^k).
\end{align*}
\end{proof}

We restate Theorem 1 of~\cite{dann2021provably} below.

\begin{theorem}[Theorem 1 of \citet{dann2021provably}]\label{thm:regret}
Assume that parameter $\eta\leq \frac{2}{5b^2}$ is set sufficiently small and that Assumption~\ref{assumption_boundedness} holds. Then the expected regret (with Thompson sampling excuted exactly) after $K$ episodes on any MDP $M$ is bounded as
\begin{align*}
    \E[\text{Regret}(K)] \leq \frac{\lambda}{\eta} \decoup + \frac{2K}{\lambda}\kappa(b/T^2) 
     + \frac{6H}{\lambda} + \frac{b}{K},
\end{align*}
where the expectation is over the samples drawn from the MDP and the algorithm's internal randomness.
\end{theorem}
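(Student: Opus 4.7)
The plan is to mirror the decomposition that the authors already used in the proof of Theorem~\ref{thm:regret_general_restate}, specialized to the exact-posterior case $q_k'=q_k$ (so the $\delta_k$ contributions vanish). First I would apply Lemma~\ref{lemma:regret-decomp} to write $\text{REG}_k = \sum_{h=1}^H \text{BE}_k^h - \text{FG}_k$, multiply by $\lambda$, and add and subtract $\tfrac{\eta}{4}\sum_{h,s<k}\EE_{[x_h,a_h]\sim p(\cdot|Q^s,x_1)}[\cE_h(Q^k;x_h,a_h)^2]$. Taking the expectation under $Q^k\sim q_k$ then splits the per-episode quantity into the two standard pieces $F_k^{dc}$ and $F_k^\kappa$, which are bounded separately.

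For $\sum_k F_k^{dc}$ I would invoke Definition~\ref{def:dc} with the choice $\mu = \eta/(4\lambda)$, valid as long as $\mu\leq 1$ (which holds under the stated smallness of $\eta$ together with a mild condition on $\lambda$). Plugging this $\mu$ into the decoupling inequality exactly cancels the $-\tfrac\eta4\sum\EE[\cE_h^2]$ term and leaves the residual $\lambda^2\decoup/\eta$, contributing $\lambda\decoup/\eta$ to the final regret after dividing by $\lambda$.

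The substance of the proof is the bound $\sum_k F_k^\kappa \leq 2K\kappa(b/K^2) + 6H + \lambda b/K$. My approach would exploit the explicit Gibbs form $q_k(w) \propto p_0(w)\exp(-\sum_{h=0}^H L_h^k(w_h))$ and perform a change-of-measure / log-partition argument. For a reference sub-distribution obtained by restricting $p_0^h$ to the slab $\cQ_h(\epsilon, Q^*_{h+1})$ with $\epsilon \approx b/K^2$, a Donsker--Varadhan variational inequality lower-bounds $\log Z_k$ in terms of the prior mass on this slab, which by Definition~\ref{Definition:kappa} contributes at most $\kappa(\epsilon)$ per episode. The $-\lambda\text{FG}_k = -\lambda(V_1^k(x_1^k)-V_1^*(x_1^k))$ contribution is exactly what the Feel-Good prior $L_0^k(w_1)=-\lambda\max_a Q(w_1;x_1^k,a)$ is designed to absorb, since evaluating the integrand at $Q^*$ cancels the $-\lambda V_1^*(x_1^k)$ term up to an $O(\lambda b/K)$ slack coming from the slab width $\epsilon$.

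The main obstacle is converting the in-sample losses $\eta\sum_{s<k}[r_h(x_h^s,a_h^s)+V_{h+1}^s(x_{h+1}^s)-Q(w_h;\phi(x_h^s,a_h^s))]^2$ inside $L_h^k$ into the population squared Bellman residuals $\EE_{[x_h,a_h]\sim p(\cdot|Q^s,x_1)}[\cE_h(Q^k;x_h,a_h)^2]$ that appear inside $F_k^\kappa$. Because the bootstrap targets $V_{h+1}^s$ are themselves random and coupled across $s$ (they depend on posteriors from earlier episodes), a plain Hoeffding bound does not suffice; one must use a Freedman-style self-normalized martingale concentration, combined with Assumptions~\ref{assumption_realizability} and~\ref{assumption_completeness}, to swap empirical residuals for expected ones. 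The precondition $\eta \leq 2/(5b^2)$ is precisely the threshold that makes the quadratic variance term in that concentration inequality dominated by the linear term, so one loses only the explicit constant factor $1/4$ in front of the squared residuals when passing from $L_h^k$ to the form appearing in $F_k^\kappa$; the residual constant $6H$ then emerges from stage-wise boundary terms, and the $b/K$ from the slab-width choice $\epsilon=b/K^2$.
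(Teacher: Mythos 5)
This statement is not proved in the paper at all: it is restated verbatim as Theorem~1 of \citet{dann2021provably} and used as a black box, so there is no in-paper proof to compare your attempt against. That matters for your first step, which is circular as written: you propose to ``mirror the decomposition that the authors already used in the proof of Theorem~\ref{thm:regret_general_restate},'' but that proof bounds $\sum_k F_k^{\kappa}$ precisely by \emph{invoking} Theorem~\ref{thm:regret}. Specializing the paper's argument to $\delta_k=0$ therefore reduces the claim to itself; the entire content of the theorem lives in the bound $\sum_k F_k^{\kappa}\leq 2K\kappa(b/K^2)+6H+\lambda b/K$, which is exactly the piece the paper does not prove.

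Your sketch of that piece is in the right spirit --- the Gibbs form of $q_k$, a change of measure against the prior restricted to the slab $\cQ_h(\epsilon,Q^*_{h+1})$ with $\epsilon\approx b/K^2$, the Feel-Good term absorbing $-\lambda V_1^*(x_1^k)$ via realizability --- and the decoupling step with $\mu=\eta/(4\lambda)$ is handled correctly. But the hardest step is misattributed: passing from the empirical squared TD losses inside $L_h^k$ to the population squared Bellman residuals in $F_k^{\kappa}$ is not done in \citet{dann2021provably} via a Freedman-style high-probability self-normalized concentration (the theorem is an in-expectation bound, so no uniform concentration over $\cQ$ is needed). It is done by an exponential-potential (online aggregation) argument that controls $\EE\exp(-\eta\,\Delta L)$ directly, and the condition $\eta\leq 2/(5b^2)$ is the threshold under which the moment-generating-function computation yields the factor $\eta/4$ in front of the expected squared residual. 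As written, your proposal names the destination but does not supply this argument, nor the origin of the $6H$ and $b/K$ terms, so it is a plausible roadmap rather than a proof; the honest resolution is either to carry out the aggregation argument in full or to cite the result, as the paper does.
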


\subsection{Analysis of Sampling Error}\label{sec:sampling_error}
Recalling the procedure for obtaining the $Q$ function at episode $k$: we first sample $w_H$ to obtain $Q_H^k$, and then for $h=H-1, \ldots, 1$, we sample $w_h$ conditional on $Q_{h+1}^k$ to obtain $Q_h^k$. Let's denote the target conditional distribution of $Q_h^k$ given $Q_{h+1}^k$ as $q^k_{h, h+1}$. If Thompson sampling is executed precisely at each step, we should acquire the $Q$ function $Q_h^k \sim q_k^h$ such that $q_k^h = q^k_{h, h+1}(\cdot | Q_{h+1}^k)$
for all $h \in [H]$. We denote the joint distribution of $\{q_k^h\}_{h \in [H]}$ as $q_k$. However, due to the high computational complexity, we can only obtain a sequence of $\{\widetilde{Q}_h^k \sim \widetilde{q}_k^h\}_{h \in [H]}$ that satisfies $\widetilde{q}_k^h = \widetilde{q}^k_{h, h+1}(\cdot | \widetilde{Q}_{h+1}^k),$
where $\widetilde{q}^k_{h, h+1}$ is a transition kernel close to $q^k_{h, h+1}$ and depends on the approximating sampler. We denote the joint distribution of $\{\widetilde{q}_k^h\}_{h \in [H]}$ as $q_k'$. Note that given $\widetilde{Q}_{h+1}^k$ generated by our algorithm, our goal is to sample from $q^k_{h, h+1}(\cdot | \widetilde{Q}_{h+1}^k)$ while we can only obtain a sample close to that, denoted as $\widetilde{q}_k^h$. Our samplers can only control the distance (error) between $\widetilde{q}^k_{h, h+1}(\cdot | \widetilde{Q}_{h+1}^k)$ and $q^k_{h, h+1}(\cdot | \widetilde{Q}_{h+1}^k)$. We denote their total variation distance by $\delta_k^h$. However, the sampling error in Theorem~\ref{thm:regret_general} is in terms of the distance between the joint distributions at episode $k$: $\delta_k = \text{TV}(q_k, q_k')$. Therefore, for a concrete analysis of sampling error, it is imperative to express $\delta_k$ in terms of $\{\delta_k^h\}_{h\in[H]}$. The expression relies on the following proposition:
\begin{proposition}\label{prop:tv_chain rule}
Suppose that we have four random variables $X_i$ and $Y_i$, $i=1,2$. Denote the conditional distribution of $Y_i$ given $X_i=x$ by $p_i(\cdot|x)$, $i=1,2$. Let $q_i$ be the joint distribution of $(X_i, Y_i)$ and $q^X_i$ be the distribution of $X_i$, $i=1,2$. If $\sup_x \mathrm{TV}(p_1(\cdot|x), p_2(\cdot|x)) \leq \epsilon < \infty$, then
\begin{align*}
    \mathrm{TV}(q_1, q_2) \leq \mathrm{TV}(q_1^X, q_2^X) + \epsilon.
\end{align*}
\end{proposition}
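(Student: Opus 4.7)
\textbf{Proof proposal for Proposition~\ref{prop:tv_chain rule}.}

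The plan is to use the factorization of a joint distribution into marginal times conditional, and then apply a one-step triangle-inequality argument in the $L^1$ characterization of total variation. Specifically, for any common dominating measure I can write $q_i(dx,dy) = q_i^X(dx)\, p_i(dy\mid x)$ for $i=1,2$, and then add and subtract the mixed term $q_1^X(dx)\, p_2(dy\mid x)$ to split the difference into a piece controlled by $\mathrm{TV}(q_1^X, q_2^X)$ and a piece controlled by the conditional error $\epsilon$.

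First I would recall that for any two probability measures $\mu,\nu$ on a common measurable space, $\mathrm{TV}(\mu,\nu) = \tfrac12 \int |d\mu - d\nu|$. Applying this to $q_1,q_2$ and using the factorization, I get the pointwise identity
\begin{align*}
q_1(dx,dy) - q_2(dx,dy)
&= q_1^X(dx)\, p_1(dy\mid x) - q_2^X(dx)\, p_2(dy\mid x) \\
&= \bigl[q_1^X(dx) - q_2^X(dx)\bigr]\, p_1(dy\mid x) + q_2^X(dx)\,\bigl[p_1(dy\mid x) - p_2(dy\mid x)\bigr].
\end{align*}
Taking absolute values, using the triangle inequality, and integrating over $(x,y)$, the first term contributes at most $\int |q_1^X(dx) - q_2^X(dx)| \cdot \int p_1(dy\mid x) = 2\,\mathrm{TV}(q_1^X, q_2^X)$, since $p_1(\cdot \mid x)$ is a probability measure for each $x$. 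The second term contributes at most $\int q_2^X(dx) \int |p_1(dy\mid x) - p_2(dy\mid x)| \le 2\epsilon \cdot \int q_2^X(dx) = 2\epsilon$ by the hypothesis $\sup_x \mathrm{TV}(p_1(\cdot\mid x), p_2(\cdot\mid x)) \le \epsilon$. Dividing by $2$ yields the claim.

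There is essentially no serious obstacle here; the only subtlety is the bookkeeping choice of which mixed term to add and subtract, so that one of the resulting integrals collapses by normalization of a conditional probability and the other collapses by the definition of $\mathrm{TV}$ on the marginals. A minor measurability caveat (ensuring the conditional densities $p_i(\cdot\mid x)$ admit a regular version so that the double integral is well defined, and that $x \mapsto \mathrm{TV}(p_1(\cdot\mid x), p_2(\cdot\mid x))$ is measurable) is handled in the standard way by working on Polish spaces and invoking the existence of regular conditional distributions, which holds in our RL setting.
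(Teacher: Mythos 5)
Your proposal is correct and follows essentially the same argument as the paper: both write the joint densities as marginal times conditional, insert a mixed term, and apply the triangle inequality so that one piece collapses by normalization of the conditional (giving $2\,\mathrm{TV}(q_1^X,q_2^X)$) and the other by the uniform bound $\epsilon$ on the conditional TV distances. The only difference is the cosmetic choice of which mixed term to insert (the paper uses $p_2(y|x)q_1^X(x)$, your algebra uses $p_1(y|x)q_2^X(x)$ despite the prose saying otherwise), which does not affect the result.
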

\begin{proof}[Proof of \Cref{prop:tv_chain rule}]
In this proof, we abuse notation by identifying a measure with its density for convenience. By definition of total variation distance, 
\begin{align*}
    2 \mathrm{TV}(q_1, q_2) & =  \int |p_1(y|x)q_1^X(x) - p_2(y|x)q_2^X(x)|dydx\\
    & \leq \int |p_1(y|x)q_1^X(x) - p_2(y|x)q_1^X(x)|dydx + \int |p_2(y|x)q_1^X(x) - p_2(y|x)q_2^X(x)|dydx\\
    & \leq \int \bigg(\int |p_1(y|x) - p_2(y|x)|dy\bigg) q_1^X(x)dx + \int \bigg(\int p_2(y|x)dy\bigg)|q_1^X(x) - q_2^X(x)|dx\\
    & = 2\int \mathrm{TV}(p_1(\cdot|x), p_2(\cdot|x)) q_1^X(x)dx + \int |q_1^X(x) - q_2^X(x)|dx\\
    & \leq 2\epsilon + 2\mathrm{TV}(q_1^X, q_2^X),
\end{align*}
where for the last equality, we use the fact that for any fixed $x$, $p_2(y|x)$ is a probability density and hence $\int p_2(y|x)dy=1$. This concludes the proof.
\end{proof}
With this proposition in hand, we are ready to prove Proposition~\ref{prop:delta} which we restate here first.

\begin{proposition}\label{prop:delta_restate}
Let $\delta_k^h$ be the sampling error (in the total variation sense) induced by our sampler at step $h$ episode $k$ and $\delta_k$ be defined in section~\ref{sec:feel_good}, $h\in[H]$ and $k\in[K]$. Then $\delta_k \leq \sum_{h=1}^H\delta_k^h$.
\end{proposition}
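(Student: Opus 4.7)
The plan is a backward induction on the stage index $h$, using \Cref{prop:tv_chain rule} as a chain rule for total variation. The key structural observation is that $Q^k$ is built up sequentially from $h=H$ down to $h=1$, and at each step $h$ the sampling distribution of $w_h^k$ (and therefore of $Q_h^k$) depends on all previously generated quantities only through $Q_{h+1}^k$, since the loss $L_h^k$ in~\eqref{eq:loss_h>0} is determined by $Q_{h+1}^k$ alone. Hence both the true joint posterior $q_k$ and the approximate joint $q_k'$ factor along the same Markov chain indexed by $h$, differing only in the transition kernels at each stage.

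Concretely, let $q_k^{(h)}$ and $\widetilde{q}_k^{(h)}$ denote the joint distributions of the tuple $(Q_H^k, Q_{H-1}^k, \ldots, Q_h^k)$ under the exact posterior and under \Cref{Algorithm:FG-LMC}, respectively. The base case $h=H$ gives $\mathrm{TV}(q_k^{(H)}, \widetilde{q}_k^{(H)}) = \delta_k^H$ directly from the definition of the per-step error, since there is no conditioning at the top stage (one may take $Q_{H+1}^k \equiv 0$). For the inductive step from $h+1$ to $h$, I would apply \Cref{prop:tv_chain rule} with $X=(Q_H^k,\ldots,Q_{h+1}^k)$ playing the role of the conditioning variable and $Y=Q_h^k$ the new variable. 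Because the conditional law of $Q_h^k$ given $X$ depends on $X$ only through its last coordinate $Q_{h+1}^k$, the supremum over $x$ of the TV distance between the true conditional $q_{h,h+1}^k(\cdot\mid\cdot)$ and the approximate conditional $\widetilde{q}_{h,h+1}^k(\cdot\mid\cdot)$ is bounded by $\delta_k^h$ by definition of the per-step error. The proposition then yields $\mathrm{TV}(q_k^{(h)}, \widetilde{q}_k^{(h)}) \le \mathrm{TV}(q_k^{(h+1)}, \widetilde{q}_k^{(h+1)}) + \delta_k^h$, and telescoping from $h=H$ down to $h=1$ gives $\mathrm{TV}(q_k, q_k') = \mathrm{TV}(q_k^{(1)}, \widetilde{q}_k^{(1)}) \le \sum_{h=1}^H \delta_k^h$, which is the claim.

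For the truncation step $Q_h^k(\cdot,\cdot)\leftarrow \min\{Q(w_h^{k,J_k};\phi(\cdot,\cdot)),\,H-h+1\}^{+}$ in Line~\ref{Eq:q-function-estimate-truncated} of the algorithm, note that truncation is a deterministic pushforward map $T$. The data-processing inequality $\mathrm{TV}(T_\#\mu,T_\#\nu)\le \mathrm{TV}(\mu,\nu)$, combined with the assumption that the exact target is supported on $\cQ$ and is therefore invariant under $T$ (as stated in the remark preceding the proposition), implies that the per-step bound $\delta_k^h$ established at the level of the sampler for $w_h^{k,j}$ continues to dominate the corresponding TV distance after truncation. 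I do not anticipate a serious obstacle: once the Markov factorization of the algorithm is made explicit, the proof reduces to a short induction on top of the already-established chain-rule \Cref{prop:tv_chain rule}. The only mild subtlety is verifying that the conditional hypothesis of that proposition is met, namely that the law of $Q_h^k$ depends on the past only through $Q_{h+1}^k$, which is automatic from the construction of the loss in~\eqref{eq:loss_h>0}; the truncation step is handled by the one-line data-processing argument above.
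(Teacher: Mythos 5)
Your proof is correct and follows essentially the same route as the paper: both apply \Cref{prop:tv_chain rule} with the already-sampled stages $(Q_H^k,\ldots,Q_{h+1}^k)$ as the conditioning variable and the newly sampled $Q_h^k$ as the conditional one, then telescope over $h$ to obtain $\delta_k \le \sum_{h=1}^H \delta_k^h$. Your treatment of the truncation map via the data-processing inequality also matches the paper's handling, which it places in the discussion immediately following the proof.
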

\begin{proof}[Proof of Proposition~\ref{prop:delta}.]
Let $q_k^{h:H}$ be the joint distribution of $\{q_k^s\}_{h\leq s\leq H}$ and $\Tilde q_k^{h:H}$ be the joint distribution of $\{\Tilde q_k^s\}_{h\leq s\leq H}$. Then by Proposition~\ref{prop:tv_chain rule},
\begin{align*}
    \delta_k = \mathrm{TV}(q_k, q_k') \leq \delta_k^1 + \mathrm{TV}(q_k^{2:H}, \Tilde{q}_k^{2:H}).
\end{align*}
Likewise, for $h=2,\ldots,H-1$,
\begin{align*}
    \mathrm{TV}(q_k^{h:H}, \Tilde{q}_k^{h:H}) \leq \delta_{k}^h + \mathrm{TV}(q_k^{h+1:H}, \Tilde{q}_k^{h+1:H}).
\end{align*}
Since $q_k^{H:H} = q_k^H$ and $\Tilde q_k^{H:H} = \Tilde q_k^H$, we have $\mathrm{TV}(q_k^{H:H}, \Tilde{q}_k^{H:H}) = \delta_k^H$. Then we conclude the proof by combining the above inequalities.
\end{proof}
Now let $T$ be the truncation map
\begin{align*}
    T(x) := \min\{x, b\}^+
\end{align*}
used in Algorithm~\ref{Algorithm:FG-LMC}. Then conditional on $\Tilde{Q}_{h+1}^k$, $\Tilde{q}_k^h$ is now given by
\begin{align*}
    \widetilde{q}_k^h = T_{\#}[\widetilde{q}^k_{h, h+1}(\cdot | \widetilde{Q}_{h+1}^k)].
\end{align*}
Since we assume that $Q_{h}^k\in \cQ$ for all $h$ and $k$, we have that $q_k^h = T_{\#}[q_k^h]$. And therefore the data-processing inequality gives
\begin{align*}
    TV(\Tilde{q}_k^h, q_k^h) = TV(T_{\#}[\widetilde{q}^k_{h, h+1}(\cdot | \widetilde{Q}_{h+1}^k)], T_{\#}[q_k^h]) \leq TV(\widetilde{q}^k_{h, h+1}(\cdot | \widetilde{Q}_{h+1}^k), q_k^h),
\end{align*}
which is exactly the sampling error in Proposition~\ref{prop:delta_restate}. And hence the conclusion in the proposition still holds for $\delta_k$ with the truncation error.

\subsection{Proof of Linear MDPs}\label{sec:proof_linear_MDP}
In this section, we prove some properties for linear MDPs.
\begin{proposition}\label{proposition:linear_MDP}
In linear MDPs, the linear function class $\cQ$ satisfies Assumption~\ref{assumption_realizability} and~\ref{assumption_boundedness} with $b=H$. And the decoupling coefficient is bounded by
\begin{align*}
    \decoup \leq 2dH(1 +\ln(2KH)).
\end{align*}
\end{proposition}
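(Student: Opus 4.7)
My plan is to address the three claims—realizability, boundedness with $b=H$, and the bound on $\decoup$—in that order. For realizability I invoke standard linear-MDP theory (\citet{jin2019provably}): a backwards induction on $h$ combined with $r_h = \langle \phi, \theta_h\rangle$ and $\PP_h(\cdot|x,a) = \langle \phi(x,a), \mu_h(\cdot)\rangle$ from Definition~\ref{def:linear_MDP} yields $Q_h^\pi(x,a) = \langle \phi(x,a), w_h^\pi\rangle$ for any policy $\pi$, with $\|w_h^\pi\|_2 = O(H\sqrt{d})$. Taking $\cQ_h$ to be the class of such (clipped) linear functions with weights bounded by $O(H\sqrt{d})$ then contains $Q_h^*$. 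Boundedness with $b=H$ is immediate: rewards lie in $[0,1]$ and only $H-h+1\le H$ stages remain from stage $h$, so any $Q_h\in\cQ_h$ takes values in $[0,H]$.

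The main work is the bound on $\decoup$. The key structural observation is that in a linear MDP the Bellman residual is itself linear in $\phi$: since $Q_h^k(x,a) = \langle \phi(x,a), w_h^k\rangle$ (pre-clipping) and $[\cT_h^* Q_{h+1}^k](x,a) = \langle \phi(x,a),\, \theta_h + \int \max_{a'} Q_{h+1}^k(x',a')\,\mu_h(dx')\rangle$ is also linear in $\phi(x,a)$, there exist vectors $\Delta_h^k \in \RR^d$ with $\|\Delta_h^k\|_2 = O(H\sqrt{d})$ such that $\cE_h(Q^k; x, a) = \langle \phi(x,a), \Delta_h^k\rangle$. Writing $\phi_h^s := \EE_{[x_h,a_h] \sim p(\cdot|Q^s, x_1)}[\phi(x_h,a_h)]$ and $\Sigma_{h,k} := \lambda I + \sum_{s=1}^{k-1}\phi_h^s (\phi_h^s)^\top$ for a small regularization $\lambda>0$, Cauchy--Schwarz gives $\langle \phi_h^k, \Delta_h^k\rangle \le \|\phi_h^k\|_{\Sigma_{h,k}^{-1}}\|\Delta_h^k\|_{\Sigma_{h,k}}$, and AM--GM with weight $\mu$ delivers
\begin{align*}
\sum_{h,k}\langle \phi_h^k, \Delta_h^k\rangle \le \mu \sum_{h,k}\|\Delta_h^k\|^2_{\Sigma_{h,k}} + \frac{1}{4\mu}\sum_{h,k}\|\phi_h^k\|^2_{\Sigma_{h,k}^{-1}}.
\end{align*}

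From here I would expand $\|\Delta_h^k\|^2_{\Sigma_{h,k}} = \lambda\|\Delta_h^k\|^2_2 + \sum_{s<k}\langle \phi_h^s, \Delta_h^k\rangle^2$ and apply Jensen's inequality to upper bound $\langle \phi_h^s, \Delta_h^k\rangle^2 \le \EE_{[x_h,a_h]\sim p(\cdot|Q^s,x_1)}[\cE_h(Q^k;x_h,a_h)^2]$, which matches the first RHS term of Definition~\ref{def:dc}; and I would control $\sum_k\|\phi_h^k\|^2_{\Sigma_{h,k}^{-1}} \le 2d\ln(1 + K/(d\lambda))$ via the elliptical potential lemma (a standard bandits tool, e.g.\ Abbasi-Yadkori et al.\ 2011), using $\|\phi\|_2 \le 1$. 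Summing over $h\in[H]$ produces the factor of $H$; comparing with Definition~\ref{def:dc} and choosing $\lambda$ so that the residual $\mu\sum_{h,k}\lambda\|\Delta_h^k\|^2_2$ is absorbed into the overall constant then yields $\decoup \le 2dH(1+\ln(2KH))$.

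The step I expect to be the main obstacle is the constant bookkeeping required to reach precisely $2dH(1+\ln(2KH))$: the leading $2dH$ is dictated by the elliptical potential lemma, but obtaining the exact factor $1+\ln(2KH)$ rather than a generic $\text{poly}\log$ demands balancing $\lambda$ against $\sum\|\Delta_h^k\|_2^2$ so that the regularization residual contributes an additive $O(1)$ inside the log while the $\ln\det$ term contributes $\ln(2KH)$. A secondary subtlety is legitimizing the linear representation of $Q_h^k$ in the presence of the clipping in \Cref{Algorithm:FG-LMC}; since clipping only alters $Q_h^k$ (not $[\cT_h^* Q_{h+1}^k]$, which is always linear in $\phi$), the residual $\Delta_h^k$ can be taken as the difference of the underlying linear weights, with the saturated case handled by the fact that clipping can only reduce $|\cE_h|$ when the unclipped $Q_h^k$ overshoots the range.
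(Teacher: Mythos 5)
Your proposal is correct in substance, but it does strictly more work than the paper: the paper's own proof of this proposition is a three-line argument (realizability and boundedness verified directly from Definition~\ref{def:linear_MDP} and $r_h\in[0,1]$, exactly as you do) followed by a citation of Proposition~1 of \citet{dann2021provably} for the decoupling bound. What you have written out is essentially the proof of that cited proposition: the linearity of the Bellman residual $\cE_h(Q^k;\cdot,\cdot)=\langle\phi(\cdot,\cdot),\Delta_h^k\rangle$ (which the paper isolates separately as \Cref{prop:linear_embedded}), Cauchy--Schwarz in the $\Sigma_{h,k}$-norm, AM--GM with weight $\mu$, Jensen to pass from $\langle\phi_h^s,\Delta_h^k\rangle^2$ to the second-moment term in Definition~\ref{def:dc}, and the elliptical potential lemma for $\sum_k\|\phi_h^k\|^2_{\Sigma_{h,k}^{-1}}$. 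This is the standard decoupling argument for linear function classes and it does deliver the stated $2dH(1+\ln(2KH))$ once the regularization $\lambda$ is balanced against $\sum_{h,k}\|\Delta_h^k\|_2^2$; you correctly identify that bookkeeping as the only delicate point. Two minor cautions: (i) your resolution of the clipping issue (``clipping can only reduce $|\cE_h|$'') is not quite an argument, since shrinking $|\cE_h|$ also shrinks the second-moment term on the right-hand side of Definition~\ref{def:dc}, so the inequality for the clipped class does not follow monotonically from the unclipped one --- the clean fix is that $\cT_h^*Q_{h+1}$ remains exactly linear in $\phi$ for \emph{any} $Q_{h+1}$ in a linear MDP, so only the stage-$h$ function needs to be taken from the linear (pre-clipping) class when bounding $\decoup$; and (ii) if you intend to claim the exact constants rather than an $O(\cdot)$ bound, you should either carry out the $\lambda$-optimization explicitly or, as the paper does, simply defer to \citet{dann2021provably}.
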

\begin{proof}[Proof of \Cref{proposition:linear_MDP}]
Note Assumption~\ref{assumption_realizability} can be verified directly by the definition of linear MDP. This boundedness follows from the fact that $r_h\in[0,1]$ for all $h\in[H]$ and hence $Q_h(x, a) \leq H-h+1 \leq H$ for any $x, a\in \cS \times \cA$. Since $Q_h$ is arbitrary here, we have $b=H$. For the upper bound of the decoupling coefficient, we refer to~\cite[Proposition 1]{dann2021provably}.
\end{proof}

\begin{proposition}\label{prop:linear_embedded}
In linear MDPs, the linear function class $\cQ$ satisfies Assumption~\ref{assumption_completeness}. And given any state $x
\in\mathcal{S}$ and $h\in[H]$, we have the following representation of $Q\in \cQ$: 
\begin{align*}
    Q_h(x,a) - [\cT_h^* Q_{h+1}](x, a) = \langle u_h,  \phi(x,a)\rangle.
\end{align*}
\end{proposition}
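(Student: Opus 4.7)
The plan is to exploit the two linear structures in the definition of a linear MDP (linear reward and linear transition kernel) to show directly that the Bellman optimality operator maps linear $Q$-functions to linear $Q$-functions, so that the Bellman residual of any linear $Q$ is itself a linear functional of $\phi(x,a)$.

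First, I would recall that every $Q_h \in \cQ_h$ in the linear function class has the form $Q_h(x,a) = \langle \phi(x,a), w_h\rangle$ for some $w_h \in \RR^d$. Given any $Q_{h+1} \in \cQ_{h+1}$, I would compute the expected future value using the representation $\PP_h(\cdot|x,a) = \langle \phi(x,a), \mu_h(\cdot)\rangle$:
\begin{align*}
\EE_{x'\sim \PP_h(\cdot|x,a)}\Bigl[\max_{a'\in\cA} Q_{h+1}(x',a')\Bigr]
  = \Bigl\langle \phi(x,a),\; \int \max_{a'\in\cA} Q_{h+1}(x',a')\, \mu_h(dx') \Bigr\rangle
  =: \langle \phi(x,a), v_{h+1}\rangle,
\end{align*}
where the vector $v_{h+1} \in \RR^d$ is well defined because $Q_{h+1}$ is bounded (Assumption~\ref{assumption_boundedness}) and each component of $\mu_h$ has bounded total variation by Definition~\ref{def:linear_MDP}. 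Combining with the linear reward $r_h(x,a) = \langle \phi(x,a), \theta_h\rangle$ yields
\begin{align*}
    [\cT_h^* Q_{h+1}](x,a) = \langle \phi(x,a),\; \theta_h + v_{h+1}\rangle.
\end{align*}

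This single identity gives both conclusions. For completeness (Assumption~\ref{assumption_completeness}), I simply take $Q_h \in \cQ_h$ to be the linear function with parameter $w_h := \theta_h + v_{h+1}$; then $Q_h = \cT_h^* Q_{h+1}$ pointwise. For the Bellman residual representation, for any $Q_h(x,a) = \langle \phi(x,a), w_h\rangle \in \cQ_h$ I would subtract the two linear expressions to get
\begin{align*}
    Q_h(x,a) - [\cT_h^* Q_{h+1}](x,a) = \langle \phi(x,a),\; w_h - \theta_h - v_{h+1}\rangle = \langle u_h, \phi(x,a)\rangle,
\end{align*}
where $u_h := w_h - \theta_h - v_{h+1} \in \RR^d$ depends only on $h$ (and the fixed $Q_h, Q_{h+1}$), not on $(x,a)$.

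There is essentially no hard step here: the result is a direct consequence of linearity, and the only mild technical point is justifying that $v_{h+1}$ is a finite vector in $\RR^d$, which is immediate from the norm bounds $\|\mu_h(\cS)\|_2 \le \sqrt{d}$ and the uniform boundedness of $Q_{h+1}$ via Assumption~\ref{assumption_boundedness}.
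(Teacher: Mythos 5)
Your proof is correct and follows essentially the same route as the paper: expand $[\cT_h^* Q_{h+1}](x,a)$ using the linear reward and the linear transition kernel to see that it is $\langle \phi(x,a), \theta_h + v_{h+1}\rangle$, take that linear function as the completeness witness, and read off $u_h$ by subtracting the two linear representations. Your version is, if anything, slightly cleaner in stating $u_h = w_h - \theta_h - v_{h+1}$ for an arbitrary parameterized $Q_h$, whereas the paper writes $u_h$ as an integral against $d\mu_h$ after representing $Q_h$ via the Bellman equation, but the substance is identical.
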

\begin{proof}[Proof of \Cref{prop:linear_embedded}]
The linearity of the action-value functions directly follows from the Bellman equation:
\begin{align*}
    Q_h(x,a) = r_h(x,a) + (\PP_h V_{h+1})(x,a) = \langle \phi(x,a), \theta_h\rangle+ \int_{\mathcal{S}} V_{h+1}(x')\langle\phi(x,a), d\mu_h(x')\rangle.
\end{align*}
And likewise
\begin{align*}
    [\cT_h^* Q_{h+1}](x, a) = \langle \phi(x,a), \theta_h\rangle + \int_{\mathcal{S}} \max_{a'\in\mathcal{A}}Q_{h+1}(x', a')\langle\phi(x,a), d\mu_h(x')\rangle.
\end{align*}
Then the completeness follows by defining
\begin{align*}
    Q_h(x,a) = \langle\phi(x,a), \theta_h+ \int_{\mathcal{S}} \max_{a'\in\mathcal{A}}Q_{h+1}(x', a') d\mu_h(x')\rangle.
\end{align*}

And therefore
\begin{align*}
    Q_h(x,a) - [\cT_h^* Q_{h+1}](x, a) = \langle\phi(x,a), u_h\rangle,
\end{align*}
where $u_h = \int_{\mathcal{S}}(V_{h+1}(x') -  \max_{a'\in\mathcal{A}}Q_{h+1}(x', a'))d\mu_h(x')$.
\end{proof}

Next, we restate \Cref{lemma:kappa_linearMDP} and provide proof for it.
\begin{lemma}\label{lemma:kappa_linearMDP_restate}
If the stage-wise priors $p_0^h$ are chosen as $\mathcal{N}(0, \sqrt{d}HI_d)$, then $\kappa(\epsilon) = dHO(\ln(dH/\epsilon))$.
\end{lemma}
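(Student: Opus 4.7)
The plan is to construct an explicit Euclidean ball in parameter space that sits inside $\cQ_h(\epsilon, Q_{h+1})$ and lower bound its Gaussian prior mass; summing $-\log$ of this bound over $h$ then yields $\kappa(\epsilon)$.

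First, I linearize. By \Cref{prop:linear_embedded}, for each $Q_{h+1}\in\cQ_{h+1}$ there is $w_h^\star\in\mathbb{R}^d$ with $[\cT_h^* Q_{h+1}](x,a) = \langle\phi(x,a),w_h^\star\rangle$; writing out
\[
w_h^\star = \theta_h + \int_{\cS} \max_{a'}Q_{h+1}(x',a')\,d\mu_h(x')
\]
together with $\|\theta_h\|_2,\|\mu_h(\cS)\|_2\leq\sqrt d$ and the boundedness $0\leq Q_{h+1}\leq H$ (Assumption \ref{assumption_boundedness} with $b=H$) gives the uniform-in-$Q$ bound $\|w_h^\star\|_2 \leq (H+1)\sqrt d$. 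Every $Q_h\in\cQ_h$ is of the form $\langle\phi,w\rangle$, so by $\|\phi(x,a)\|_2\leq 1$ and Cauchy--Schwarz, $\|w-w_h^\star\|_2\leq\epsilon$ implies $\sup_{x,a}|Q_h(x,a)-[\cT_h^* Q_{h+1}](x,a)|\leq\epsilon$. Hence the Euclidean ball $B_\epsilon(w_h^\star)$ is contained in $\cQ_h(\epsilon, Q_{h+1})$ once functions are identified with their parameters.

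Next, I lower bound the prior mass of this ball. Using that the density of $\mathcal{N}(0,\sqrt d H\,I_d)$ attains its minimum on $B_\epsilon(w_h^\star)$ at a point of norm at most $\|w_h^\star\|_2+\epsilon\leq (H+1)\sqrt d+\epsilon$, and applying Stirling to bound $\mathrm{vol}(B_\epsilon)\gtrsim(c\epsilon/\sqrt d)^d$, I obtain
\[
-\ln p_0^h(\cQ_h(\epsilon, Q_{h+1})) \;\leq\; \tfrac{d}{2}\ln\!\left(\tfrac{d^{3/2}H}{\epsilon^2}\right) + \tfrac{(\|w_h^\star\|_2+\epsilon)^2}{2\sqrt d H} + O(\log d).
\]
The quadratic-exponent term evaluates to $O(H\sqrt d)$ via the bound from Step 1, and is absorbed into the leading $d\log(dH/\epsilon)$ term in the parameter regime where $H\sqrt d \lesssim d\log(dH/\epsilon)$. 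Summing the per-step bound over $h\in[H]$ and taking the supremum over $Q\in\cQ$ yields $\kappa(\epsilon)\leq dH\cdot O(\log(dH/\epsilon))$.

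The main obstacle is bookkeeping in the mass lower bound: the Gaussian normalization, Stirling's approximation for the ball volume, and the quadratic-exponent term must be tracked carefully enough to verify that the leading $d\log(dH/\epsilon)$ scaling dominates the subleading $H\sqrt d$ contribution coming from $\|w_h^\star\|_2^2/(\sqrt d H)$. The linearization and the Cauchy--Schwarz ball inclusion, which together carry the conceptual content of the proof, are essentially one-line observations that exploit the linear-MDP structure together with the chosen isotropic Gaussian prior.
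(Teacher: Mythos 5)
Your proof takes essentially the same route as the paper's: identify $\cQ_h(\epsilon,Q_{h+1})$ with an $\epsilon$-ball around $w_h^\star$ in parameter space via the linear structure, lower-bound its Gaussian prior mass, take $-\ln$, and sum over $h\in[H]$. You are in fact more careful than the paper, whose one-line mass computation silently drops the exponent term $\|w_h^\star\|_2^2/(2\sqrt{d}H)=O(H\sqrt{d})$ that you correctly flag as being absorbed into $d\ln(dH/\epsilon)$ only when $H\lesssim\sqrt{d}\ln(dH/\epsilon)$ (or when $\sqrt{d}H$ is read as the standard deviation rather than the variance, in which case it is $O(1)$).
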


\begin{proof}[Proof of \Cref{lemma:kappa_linearMDP}]
By our choice of $p_0^h$,
\begin{align*}
    p_0^h(\cQ_h(\epsilon, Q_{h+1})) = O(\epsilon^d (2\pi \sqrt{d}H)^{d/2}).
\end{align*}
And hence
\begin{align*}
    \ln\left(\frac{1}{p_0^h(\cQ_h(\epsilon, Q_{h+1}))}\right) = O(d\ln(1/\epsilon)+d\ln(dH)) = dO(\ln(dH/\epsilon)).
\end{align*}
And finally,
\begin{align*}
    \kappa(\epsilon) = \sum_{h=1}^H\ln\left(\frac{1}{p_0^h(\cQ_h(\epsilon, Q_{h+1}))}\right) = dH O(\ln(dH/\epsilon)).
\end{align*}
\end{proof}

\section{More Details on Approximate Samplers}\label{section:samplers}

In this section, we provide details of samplers with the target distribution $\mu\propto e^{-L}$ in $\RR^d$, where $L$ is twice continuously differentiable, satisfying
the conditions $m I_d \preccurlyeq \nabla^2 L \preccurlyeq M I_d$ for some $M \geq m > 0$. We also define the condition number of $\mu$ by $\kappa = \frac{M}{m}$.

\subsection{Langevin Monte Carlo (LMC)}
The Langevin Monte Carlo (LMC) algorithm samples from a target distribution $\mu \propto \exp(-L)$, using a discretized version of the continuous-time Langevin diffusion. Given an initial distribution $p_0$, and a step size $\tau > 0$, LMC generates a Markov chain $\{w_n\}_{n=0}^N$, starting from $w_0 \sim p_0$ where $w_n \in \mathbb{R}^d$. At each iteration $(n+1)$, the chain updates its state, $w_n$, using:
\begin{align*}
w_{n+1} &= w_n - \tau \nabla L(w_n) + \sqrt{2\tau} \xi_n,
\end{align*}
where $\xi_n$'s are samples from the d-dimensional standard Gaussian independent of $w_n$. $N$ is the total number of iteration. The convergence of LMC to the target distribution is a well-known result, holding true under specific assumptions~\citep{chewi2022analysis}. For illustration, The following theorem provides a concrete example with sufficient conditions for convergence, which establishes the theoretical foundation for our  analysis.
\begin{theorem}[\cite{vempala2019rapid}]\label{thm:convergence_LMC}
Denote the distribution of $w_n$ by $p_n$. For any $\epsilon\in[0, \kappa\sqrt{d}]$, if we take $\tau = O(\frac{\epsilon^2}{\kappa M d})$, then we obtain the guarantee $TV(p_N, \mu)\leq \epsilon$ after
\begin{align*}
    N = \Tilde{\Theta}\left(\frac{\kappa^2 d}{\epsilon^2}\right)\quad \text{iterations}.
\end{align*}
\end{theorem}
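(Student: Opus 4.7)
The plan is to prove the bound via a KL-divergence analysis followed by Pinsker's inequality, which is the standard Vempala-Wibisono route. The target $\mu\propto e^{-L}$ is $m$-strongly log-concave (since $\nabla^2 L\succeq m I_d$), and so by Bakry-Émery it satisfies a log-Sobolev inequality with constant at least $m$: for any smooth density $p$,
\begin{equation*}
    \mathrm{KL}(p\|\mu)\;\leq\; \frac{1}{2m}\,\mathrm{I}(p\|\mu),
\end{equation*}
where $\mathrm{I}(p\|\mu)=\int p\,\|\nabla\log(p/\mu)\|^2$ is the relative Fisher information. This step-size-free contraction engine is what will drive the rate.

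Next I would represent one LMC step as a continuous-time SDE interpolation on $[n\tau,(n+1)\tau]$: $dX_t=-\nabla L(X_{n\tau})\,dt+\sqrt{2}\,dB_t$, so that $X_{(n+1)\tau}$ has the same law $p_{n+1}$ as the LMC iterate. Writing the Fokker--Planck equation for $p_t$ and differentiating $\mathrm{KL}(p_t\|\mu)$ in time yields, after an integration by parts,
\begin{equation*}
    \tfrac{d}{dt}\mathrm{KL}(p_t\|\mu)\;=\;-\mathrm{I}(p_t\|\mu)+\mathbb{E}\bigl\langle\nabla L(X_t)-\nabla L(X_{n\tau}),\,\nabla\log\tfrac{p_t}{\mu}(X_t)\bigr\rangle.
\end{equation*}
Applying Cauchy--Schwarz and the $M$-smoothness of $L$, then using the SDE to bound $\mathbb{E}\|X_t-X_{n\tau}\|^2=O(M^2 d\,\tau^2+d\tau)$, I can absorb half of the Fisher information into the discretization error to obtain the Vempala--Wibisono one-step inequality
\begin{equation*}
    \tfrac{d}{dt}\mathrm{KL}(p_t\|\mu)\;\leq\;-\tfrac{3}{4}\mathrm{I}(p_t\|\mu)+C\,M^2 d\,\tau,
\end{equation*}
valid whenever $\tau\lesssim 1/M$. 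Combined with LSI this gives $\tfrac{d}{dt}\mathrm{KL}(p_t\|\mu)\leq -\tfrac{3m}{2}\mathrm{KL}(p_t\|\mu)+C M^2 d\tau$, and Grönwall on $[n\tau,(n+1)\tau]$ yields the per-step recursion
\begin{equation*}
    \mathrm{KL}(p_{n+1}\|\mu)\;\leq\;e^{-m\tau}\mathrm{KL}(p_n\|\mu)+C'\,M^2 d\,\tau^2.
\end{equation*}

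Iterating the recursion across $N$ steps and summing the geometric series gives $\mathrm{KL}(p_N\|\mu)\leq e^{-mN\tau}\mathrm{KL}(p_0\|\mu)+O(M^2 d\tau/m)=O(\kappa M d\tau)$ plus a transient term. To make $\mathrm{KL}(p_N\|\mu)\leq 2\epsilon^2$, I pick $\tau=\Theta(\epsilon^2/(\kappa M d))$ so that the bias term is $\leq \epsilon^2$, and then choose $N$ so that $mN\tau=\tilde\Theta(1)$, which forces $N=\tilde\Theta(1/(m\tau))=\tilde\Theta(\kappa^2 d/\epsilon^2)$ (the logarithmic factors absorb the initial KL, which for a reasonable warm start is polynomial in $\kappa,d$). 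Pinsker's inequality $\mathrm{TV}(p_N,\mu)\leq\sqrt{\mathrm{KL}(p_N\|\mu)/2}\leq\epsilon$ closes the argument; the condition $\epsilon\leq\kappa\sqrt d$ is what ensures $\tau\lesssim 1/M$ so the one-step inequality is valid.

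The main obstacle is the discretization-error step: quantifying $\mathbb{E}\|X_t-X_{n\tau}\|^2$ inside the continuous-time KL derivative requires Itô calculus together with $M$-smoothness control on $\nabla L$ along the interpolated path, and the $3/4$-vs-$1/4$ splitting of the Fisher information must be done precisely so that the LSI contraction is not destroyed. Everything else is routine: Bakry--Émery, Grönwall, geometric summation, and Pinsker.
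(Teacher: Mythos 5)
Your sketch is correct and follows exactly the interpolation-plus-log-Sobolev argument of Vempala and Wibisono, which is the route the paper itself relies on: the paper does not prove \Cref{thm:convergence_LMC} but imports it verbatim from \cite{vempala2019rapid}, so there is nothing to compare beyond noting that you have faithfully reconstructed the cited proof (LSI constant $m$ from Bakry--\'Emery, the one-step KL recursion $\mathrm{KL}(p_{n+1}\|\mu)\leq e^{-m\tau}\mathrm{KL}(p_n\|\mu)+O(M^2 d\tau^2)$, geometric summation, and Pinsker). The only slip is cosmetic: with $\tau=\Theta(\epsilon^2/(\kappa M d))$ the constraint $\epsilon\leq\kappa\sqrt d$ gives $\tau\lesssim 1/m$ rather than $\tau\lesssim 1/M$, but this does not affect the stated rate.
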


\subsection{Underdamped LMC}
The underdamped Langevin dynamics (ULD) for $(w_t, P_t) \in \mathbb{R}^{2d}$ is driven by the SDE
\begin{align}
\begin{split}
dw_t & = P_t dt, \\
dP_t & = -\nabla L(w_t)dt + \gamma P_tdt + \sqrt{2\beta^{-1}\gamma}dB_t,\\ 
\end{split}
\end{align}
which can be viewed as a second-order Langevin dynamics. 
We can use different discretization schemes to obtain discrete-time algorithms. For ULMC using the scheme~\eqref{ulmc:exact}, we have the following convergence result:
\begin{theorem}[\cite{zhang2023improved}]\label{thm:convergence_ULMC}
For ULMC, assume that  our target distribution satisfies that $\E_{\mu}[\|\cdot\|] = m_1 < +\infty$, $\nabla L(0)=0$ (without loss of generality) and $L(0)-\min L = \Tilde{O}(d)$. Then if we set $\tau = \Tilde{\Theta}(\frac{\epsilon m^{1/2}_1}{M d^{1/2}})$ and $\gamma = \Theta(\sqrt{M})$ with a warm start, the law of the $N$-th iterate of ULMC $p_N$ satisfies
\begin{align*}
    TV(p_N, \mu) \leq \epsilon\quad\text{after}\quad N=\Tilde{\Theta}\left(\frac{\kappa^{3/2}d^{1/2}}{\epsilon}\right)\quad\text{iterations}.
\end{align*}
\end{theorem}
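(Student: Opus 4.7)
The plan is to prove the stated TV-convergence bound for ULMC in three stages: (i) contraction of the continuous-time underdamped Langevin dynamics (ULD) toward its stationary measure $\pi := \mu \otimes \mathcal{N}(0,\beta^{-1}I_d)$, (ii) a single-step discretization-error estimate for the exponential integrator \eqref{ulmc:exact}, and (iii) a refined aggregation (a shifted-divergence / Girsanov argument) that upgrades the per-step bound to a global TV bound of the required sharpness. The target scaling $N = \tilde\Theta(\kappa^{3/2}d^{1/2}/\epsilon)$ is the hallmark of the momentum variable: it improves the LMC rate $\tilde\Theta(\kappa^2 d/\epsilon^2)$ (from \Cref{thm:convergence_LMC}) by effectively a factor of $\sqrt{d}/\epsilon$, which can only appear if one avoids applying Pinsker's inequality naively.

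First, I would establish the continuous-time contraction. Under $mI \preceq \nabla^2 L \preceq MI$ and the choice $\gamma = \Theta(\sqrt{M})$, a synchronous coupling of two copies of ULD driven by the same Brownian motion contracts in a modified Euclidean metric on $(w,P)$ at rate $\Theta(\sqrt{m}/\kappa^{1/2})$; this is the standard hypocoercivity analysis of Cheng–Chatterji–Bartlett–Jordan. Combined with the warm-start hypotheses $\nabla L(0)=0$ and $L(0)-\min L = \tilde O(d)$, an easy moment estimate shows $W_2^2(\pi_0,\pi) = \tilde O(d/m)$, so the continuous process reaches $W_2$-accuracy $\delta$ in time $t_\delta = \tilde\Theta(\kappa^{1/2}\log(d/\delta))$.

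Second, I would bound the one-step error of the integrator \eqref{ulmc:exact}. Because the drift is frozen at $w_{k\tau}$ on $[k\tau,(k+1)\tau]$ while the Ornstein–Uhlenbeck part in $P_t$ is integrated exactly, Girsanov's theorem gives a relative entropy estimate
\begin{equation*}
\mathrm{KL}(\text{discrete path}\,\|\,\text{exact path}) \lesssim \beta \int_0^\tau \mathbb{E}\|\nabla L(w_{k\tau+s}) - \nabla L(w_{k\tau})\|^2\,ds,
\end{equation*}
and $M$-smoothness plus an SDE moment bound yield $\mathbb{E}\|w_{k\tau+s}-w_{k\tau}\|^2 \lesssim s^2\mathbb{E}\|P_{k\tau}\|^2 + \tfrac{\beta^{-1} s^3}{3}$, producing a per-step KL increment of order $M^2 \tau^3 d / \beta$ (after bounding $\mathbb{E}\|P\|^2 \lesssim d/\beta$ in stationarity plus a transient term controlled by $m_1$).

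The main obstacle, and the crux of the Zhang–Chewi–Erdogdu-type improvement, is step (iii): a direct composition of per-step KL bounds followed by Pinsker yields only $\tilde\Theta(d/\epsilon^2)$-scaling, which is strictly weaker than the claim. The sharper route is to use a shifted composition rule (or equivalently, the shifted Rényi divergence technique): because the continuous ULD contracts in $W_2$, one may interleave a small Wasserstein shift between consecutive KL comparisons and then invoke a regularization lemma (roughly, $\mathrm{TV}(q, T_\#q') \lesssim \sqrt{\mathrm{KL}} + \text{Lipschitz factor}\cdot W_2$ after smoothing by an OU semigroup) to absorb the $\|w_t - w_{k\tau}\|$ term. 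Carrying this through over $N$ steps and optimizing $\tau$ against $N$ yields precisely $\tau = \tilde\Theta(\epsilon\, m_1^{1/2}/(M d^{1/2}))$ and $N = \tilde\Theta(\kappa^{3/2}d^{1/2}/\epsilon)$. I would expect the bookkeeping in this last step --- ensuring that the accumulated shift never exceeds the contraction budget and that the warm-start hypothesis suffices to control the initial $W_2$ --- to be the most delicate part of the argument, with the continuous contraction and Girsanov bound being largely textbook.
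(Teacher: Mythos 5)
This theorem is not proved in the paper at all: it is quoted, with attribution, from \cite{zhang2023improved}, and the appendix (Section~\ref{section:samplers}) states it purely as an imported black-box convergence guarantee used to instantiate the sampling-complexity bounds in \Cref{Theorem:sample_complexity_bound}. So there is no internal proof to compare your attempt against; the only meaningful comparison is with the cited reference.

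Measured against that reference, your outline has the right skeleton in steps (i) and (ii) --- contraction of continuous-time ULD under strong log-concavity with $\gamma=\Theta(\sqrt{M})$, plus a Girsanov-based bound on the discretization error that exploits the fact that the Ornstein--Uhlenbeck part of \eqref{ulmc:exact} is integrated exactly, so the local position increment is $O(s^2\|P\|^2)$ rather than $O(s\|\nabla L\|)$. However, your step (iii) misidentifies where the $\sqrt{d}/\epsilon$ scaling comes from. The improvement over LMC is already delivered by the higher-order one-step error of the exponential integrator: applying Girsanov once to the entire path measure over the mixing horizon $T$ gives a total KL of order $M^2\tau^2 dT$, and a single application of Pinsker then yields $\mathrm{TV}\lesssim M\tau\sqrt{dT}$, which optimizes to $N=\tilde\Theta(\kappa^{3/2}d^{1/2}/\epsilon)$ without any shifted-divergence or shifted-composition machinery; that technique belongs to a different line of work (high-accuracy samplers with algorithmic warm starts) and is not what the cited analysis uses. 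You have also left the step you yourself identify as the crux as a sketch ("I would expect the bookkeeping\dots to be the most delicate part"), so even on its own terms the proposal is an outline rather than a proof. Since the paper treats this as a citation, the honest resolution is to do the same, or else to carry out the full Girsanov computation from \cite{zhang2023improved} rather than substituting a heavier and unnecessary aggregation argument.
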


\section{Experiment Details}
In this section, we provide further details on our experiment and implementation.

\subsection{$N$-Chain}\label{sec:nchain}

We use $\phi_{\text{therm}}(s_t) = (\mathbf{1} \{x \le s_t\})$ in $\{0,1\}^N$ as input feature following \citet{osband2016deep}. We further follow the same protocol as in \citet{ishfaq2023provable} for our experiments. For all the baseline algorithms, as well as our proposed algorithms FG-LMCDQN, FG-ULMCDQN and ULMCDQN, we parameterize the Q function using a multi-layer perceptron (MLP). We use $[32,32]$ sized hidden layers in the MLP and $ReLU$ as the activation functions. All algorithms are trained for $10^5$ steps where the experience replay buffer size is $10^4$. The performance of each algorithm is measured by the mean return of the last $10$ test episodes. We use mini-batch size of $32$ and set discount factor as $\gamma_{\text{discount}} = 0.99$. The target network is updated every $100$ steps.

For our proposed algorithms, we do a grid search for the hyper-parameters: learning rate $\tau$, bias factor $a$ in the optimizers,  the temperature $\beta$, the friction coefficient $\gamma$ and the Feel-Good prior weight $\eta$. We list the detailed values of all swept hyper-parameters in Table \ref{tb:nchain}. Following \citet{ishfaq2023provable}, for the adaptive bias  term, we set $\alpha_1 = 0.9$, $\alpha_2 = 0.99$, and $\lambda = 10^{-8}$ in~\eqref{Eq:aulmc}.

\begin{figure*}[h]
    \centering
    \includegraphics[width=\textwidth]{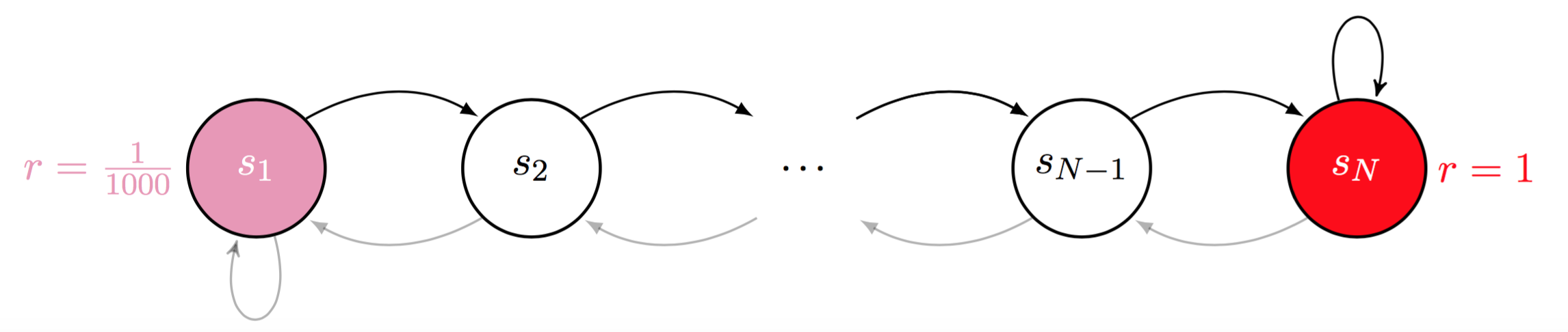}
    \caption{N-Chain environment  \citep{osband2016deep}.}
    \label{fig:nchain}
\end{figure*}

\begin{table}[htbp]
\centering
\begin{tabular}{lcc}
\toprule
{Hyper-parameter} & {Values} \\
\midrule
learning rate $\tau$ & \{$0.01$,  $0.001$\} \\
bias factor $a$ & \{$1.0$, $0.1$, $0.01$\} \\
temperature $\beta$ & \{$10^{12}$, $10^{10}$, $10^{8}$\} \\
update number $J_k$ & \{$4$\}\\
friction coefficient $\gamma$ & \{$1$, $0.1$, $0.01$\}\\
feel-good prior weight $\eta$ & \{$1$, $0.1$, $0.01$\}\\
\bottomrule
\end{tabular}
\caption{The swept hyper-parameter in $N$-Chain experiments.}
\label{tb:nchain}
\end{table}

\subsection{Atari}

\subsubsection{Experiment Setup}\label{sec:atari}
For our Atari experiments, our training and evaluation protocol follows that of \citet{mnih2015human, ishfaq2023provable}.
We implement FG-LMCDQN, FG-ULMCDQN and ULMCDQN using the Tianshou framework \citep{tianshou}.

To maintain a reasonable training time, for all our algorithms FG-LMCDQN, FG-ULMCDQN and ULMCDQN, we set $J_k =1$. Similar to the N-chain experiments, following \citet{ishfaq2023provable}, for the adaptive bias  term, we set $\alpha_1 = 0.9$, $\alpha_2 = 0.99$, and $\lambda = 10^{-8}$ in~\eqref{Eq:aulmc}. We list the detailed values of all swept hyper-parameters in Table \ref{tb:atari}.

\begin{table}[htbp]
\centering
\begin{tabular}{lcc}
\toprule
{Hyper-parameter} & {Values} \\
\midrule
learning rate $\tau$ & \{$0.01$,  $0.001$, $0.0001$\} \\
bias factor $a$ & $\{0.1, 0.01\}$ \\
 temperature $\beta$ & $\{10^{16}, 10^{14}, 10^{12}\}$ \\
update number $J_k$ & \{$1$\}\\
friction coefficient $\gamma$ & \{$10$, $1$\}\\
feel-good prior weight $\eta$ & \{$1$, $10^{-2}$, $10^{-3}$, $10^{-4}$, $10^{-5}$\}\\
\bottomrule
\end{tabular}
\caption{The swept hyper-parameter in Atari games.}
\label{tb:atari}
\end{table}

\subsubsection{Raw Scores in Atari}

In \Cref{table:atari_score_final}, we directly compare the performance of our proposed algorithms FG-LMCDQN, FG-ULMCDQN and ULMCDQN with other baselines by presenting the maximal score obtained by each algorithm in each of the 8 Atari games. The results are averaged over 5 random seeds.

\begin{table}[th]
\centering
\resizebox{\linewidth}{!}{
\begin{tabular}{ccccccccc}
\toprule
${}_{\text{Methods}}\mkern-6mu\setminus\mkern-6mu{}^{\text{Games}}$            & Alien & Freeway & Gravitar & H.E.R.O & Pitfall! & Q*bert & Solaris & Venture \\ \midrule
Human            & 7128  & 30      & 3351     & 30826   & 6464     & 13455  & 12327   & 1188    \\
Random           & 228   & 0       & 173      & 1027    & -229     & 164    & 1263    & 0       \\
\midrule
IQN              & 1691 $\pm$ 155 & 34 $\pm$ 0 & 413 $\pm$ 31 & 11229 $\pm$ 1098 & -5 $\pm$ 4 & 14324 $\pm$ 643 & 1082 $\pm$ 258 & 3 $\pm$ 3 \\
BootstrappedDQN & 1067 $\pm$ 133 & 29 $\pm$ 1 & 312 $\pm$ 50 & 13538 $\pm$ 696 &  -53 $\pm$ 19 & 11786 $\pm$ 1474 &  923 $\pm$ 383 &  12 $\pm$ 12 \\
C51              &  1878 $\pm$ 1878 &  34 $\pm$ 0 &  254 $\pm$ 52 &  \textbf{17500} $\pm$ \textbf{2170} &  -19 $\pm$ 16 & 13394 $\pm$ 1500 & 303 $\pm$ 191 & 31 $\pm$ 31 \\
PrioritizedDQN  & 1799 $\pm$ 202 & 23 $\pm$ 10 & 149 $\pm$ 37 & 14462 $\pm$ 1249 & -46 $\pm$ 16 & 9464 $\pm$ 880 & \textbf{1377} $\pm$ \textbf{423} & 23 $\pm$ 23 \\
NoisyNetDQN     & 1545 $\pm$ 217 & 32 $\pm$ 0 & 124 $\pm$ 47 & 6003 $\pm$ 1903 & -31 $\pm$ 21 & 11046 $\pm$ 1594 & 1373 $\pm$ 361 & 24 $\pm$ 21 \\
AdamLMCDQN      & 1772 $\pm$ 188 & 33 $\pm$ 0 & 799 $\pm$ 34 & 11366 $\pm$ 348 & -6 $\pm$ 4 & 14628 $\pm$ 498 & 938 $\pm$ 189 & \textbf{1326} $\pm$ \textbf{92} \\
ULMCDQN         & 1999 $\pm$ 687 & 34 $\pm$ 0 & 697 $\pm$ 64 & 15201 $\pm$ 3141 & -3 $\pm$ 2 & \textbf{14704} $\pm$ \textbf{794} & 1195 $\pm$ 390 & 1132 $\pm$ 195 \\
FG-LMCDQN         &    \textbf{2380} $\pm$ \textbf{645}   &    23 $\pm$ 10     &    744 $\pm$ 73      &     11576 $\pm$ 202    &     \textbf{0} $\pm$ \textbf{0}     &    14674 $\pm$ 436    &     735 $\pm$ 132    &     1069 $\pm$ 60    \\
FG-ULMCDQN        & 2030 $\pm$ 446    &    \textbf{34} $\pm$ \textbf{0}     &    \textbf{844} $\pm$ \textbf{312}      &     14044 $\pm$ 1220    &    -9 $\pm$ 9      &    14385 $\pm$ 579    &    1278 $\pm$ 851     &     1198 $\pm$ 198    \\ \bottomrule
\end{tabular}
}
\caption{Experiments results on 8 Atari Games. Table \ref{table:atari_score_final} presents the scores in each environment with 50M frames. For each environment, the algorithms perform 5 runs with random seeds. Then we average the scores for each game over 5 runs as the final result with a 95\% confidence interval. We consider 6 baselines: IQN \citep{dabney2018implicit}, Bootstrapped DQN \citep{osband2016deep}, C51 \citep{bellemare2017distributional}, Prioritized DQN \citep{schaul2015prioritized}, NoisyNet DQN \citep{fortunato2017noisy} and AdamLMCDQN \citep{ishfaq2023provable}. The scores for IQN, C51 and Prioritized DQN are taken from DQN Zoo \citep{dqnzoo2020github}. The scores for Bootstrapped DQN, NoisyNet DQN and AdamLMCDQN are taken from \url{https://github.com/hmishfaq/LMC-LSVI} which is the official github repository of \citet{ishfaq2023provable}. We implemented our algorithms ULMCDQN, FG-LMCDQN and FG-UMLCDQN with Tianshou framework \citep{tianshou} in which we also use the double Q method \citep{van2016deep}. Our algorithms have demonstrated advantages in Alien, Freeway, Gravitor and Pitfall compared with baselines.}
\label{table:atari_score_final}
\end{table}

\end{document}